\newcommand{\sets}[1]{\ensuremath{\mathcal{#1}}}
\newtheoremstyle{thm-sf}{}{}{\itshape}{}{\sffamily\bfseries}{.}{ }{}
\theoremstyle{thm-sf}
\newtheorem{example}{Example}
\newtheorem{theorem}{Theorem}
\newtheorem{proposition}{Proposition}
\tikzstyle{every picture} += [>=stealth]
\def\@seccntformat#1{\csname the#1\endcsname.\quad}
\theparentequation\alph{equation}}
\newcommand{\subequationsformat}{\theparentequation.\arabic{equation}}
\newcommand{\hidefastcompile}[1]{\ifthenelse{\boolean{fastcompile}}{}{#1}}
\newcommand{\newpv}[1]{{\color{black} #1}}
\newcommand{\rev}[1]{{\color{black} #1}}
\title{\textbf{\textsf{Learning Optimal Classification Trees:\\ Strong Max-Flow Formulations}}}
\author{Sina Aghaei$^1$, Andr\'es G\'omez$^2$, Phebe Vayanos$^1$ \vspace{0.6cm}\\ 
        \small{$^2$Department of Industrial and Systems Engineering, Viterbi School of Engineering} \\
        \small{$^1$Center for Artificial Intelligence in Society} \\
        \small{$^{1,2}$University of Southern California} \vspace{0.6cm} \\
        \texttt{\{saghaei,gomezand,phebe.vayanos\}@usc.edu}}%
\begin{document}

\date{}
\maketitle

\begin{abstract}
We consider the problem of learning optimal binary classification trees. Literature on the topic has burgeoned in recent years, motivated both by the empirical suboptimality of heuristic approaches and the tremendous improvements in mixed-integer programming (MIP) technology. Yet, existing approaches from the literature do not leverage the power of MIP to its full extent. Indeed, they rely on \emph{weak} formulations, resulting in slow convergence and large optimality gaps. To fill this gap in the literature, we propose a flow-based MIP formulation for optimal binary classification trees that has a \emph{stronger} linear programming relaxation. Our formulation presents an attractive decomposable structure. We exploit this structure and max-flow/min-cut duality to derive a Benders' decomposition method, which scales to larger instances. We conduct extensive computational experiments on standard benchmark datasets on which we show that our proposed approaches are 50 times faster than state-of-the art MIP-based techniques and  improve out of sample performance up to 13.8\%.
\end{abstract}

\section{Introduction}
\label{sec:Introduction}

\subsection{Motivation \& Related Work}

Since their inception over 30 years ago, decision trees have become among the most popular techniques for interpretable machine learning (classification and regression), see~\citet{breiman1984classification}. A decision tree takes the form of a \emph{binary tree}. In each \emph{internal} node of the tree, a binary test is performed on a specific feature. Two branches emanate from each internal node, with each branch representing the outcome of the test. If a datapoint passes (resp.\ fails) the test, it is directed to the left (resp.\ right) branch.  A predicted label is assigned to all \emph{leaf} nodes. Thus, each path from root to leaf represents a classification rule that assigns a unique label to all datapoints that reach that leaf. The goal in the design of optimal decision trees is to select the tests to perform at each internal node and the labels to assign to each leaf to maximize prediction accuracy (classification) or to minimize prediction error (regression). Not only are decision trees popular in their own right; they also form the backbone for more sophisticated machine learning models. For example, they are the building blocks for random forests, one of the most popular and stable machine learning techniques available, see e.g.,~\citet{liaw2002classification}. They have also proved useful to provide explanations for the solutions to optimization problems, see e.g.,~\citet{bertsimas2018voice}.

The problem of learning optimal decision trees is an $\mathcal{NP}$-hard problem, see~\citet{hyafil1976constructing} and~\citet{breiman2017classification}. It can intuitively be viewed as a combinatorial optimization problem with an exponential number of decision variables: at each internal node of the tree, one can select what feature to branch on (and potentially the level of that feature), guiding each datapoint to the left or right using logical constraints.

\paragraph{Traditional Methods.} Motivated by these hardness results, traditional algorithms for learning decision trees have relied on heuristics that employ very intuitive, yet ad-hoc, rules for constructing the decision trees. For example, CART uses the Gini Index to decide on the splitting, see~\citet{breiman1984classification}; ID3 employs entropy, see~\citet{quinlan1986induction}; and C4.5 leverages normalized information gain, see~\citet{quinlan2014c4}. The high quality and speed of these algorithms combined with the availability of software packages in many popular languages such as \texttt{R} or \texttt{Python} has facilitated their popularization, see e.g.,~\citet{therneau2015package,kuhn2018package}. They are now routinely used in commercial, medical, and other applications.

\paragraph{Mathematical Programming Techniques.} Motivated by the heuristic nature of traditional approaches, which provide no guarantees on the quality of the learned tree, several researchers have proposed algorithms for learning provably \emph{optimal} trees based on techniques from mathematical optimization. Approaches for learning optimal decision-trees rely on enumeration coupled with rules to prune-out the search space. For example, \citet{nijssen2010optimal} use itemset mining algorithms and~\citet{narodytska2018learning} use satisfiability (SAT) solvers. \citet{verhaeghe2019learning} propose a more elaborate implementation combining several ideas from the literature, including branch-and-bound, itemset mining techniques and caching. \citet{hu2019optimal} use analytical bounds (to aggressively prune-out the search space) combined with a tailored bit-vector based implementation.

\paragraph{The Special Case of MIP.} As an alternative approach to conducting the search for optimal trees, \citet{bertsimas2017optimal} recently proposed to use mixed-integer programming (MIP) to learn optimal classification trees. Following this work, using MIP to learn decision trees gained a lot of traction in the literature with the works of~\citet{gunluk2018optimal}, \citet{aghaei2019learning}, and~\citet{verwer2019learning}. This is no coincidence. First, MIP comes with a suit of off-the shelf solvers and algorithms that can be leveraged to effectively prune-out the search space. Indeed, solvers such as \citet{cplex2009v12} and \citet{gurobi2015gurobi}  have benefited from decades of research, see~\citet{bixby2012brief}, and have been very successful at solving a broad class of MIP problems. Second, MIP comes with a highly expressive language that can be used to tailor the objective function of the problem or to augment the learning problem with constraints of practical interest. For example, \citet{aghaei2019learning} leverage the power of MIP to incorporate fairness and interpretability constraints into learned classification and regression trees. They also show how MIP technology can be exploited to learn decision trees with more sophisticated structure (linear branching and leafing rules). Similarly, \citet{gunluk2018optimal} use MIP to solve classification trees with combinatorial branching decisions.
MIP formulations have also been leveraged to design decision trees for decision- and policy-making problems, see~\citet{azizi2018designing} and \citet{ciocan2018interpretable}, and for optimizing decisions over tree ensembles, see~\citet{mivsic2017optimization}.

\paragraph{Discussion \& Motivation.} The works of \citet{bertsimas2017optimal}, \citet{gunluk2018optimal}, \citet{aghaei2019learning}, and \citet{verwer2019learning} have served to showcase the modeling power of using MIP to learn decision trees and the potential suboptimality of traditional algorithms. Yet, we argue that they have not leveraged the power of MIP to its full extent. A critical component for efficiently solving MIPs is to pose good formulations, but determining such formulations is no simple task.  
The standard approach for solving MIP problems is the branch-and-bound method, which partitions the search space recursively and solves Linear Programming (LP) relaxations for each partition to produce lower bounds for fathoming sections of the search space. Thus, since solving a MIP requires solving a large sequence of LPs, small and compact formulations are desirable as they enable the LP relaxation to be solved faster. Moreover, formulations with tight LP relaxations, referred to as \emph{strong} formulations, are also desirable, as they produce higher quality lower bounds which lead to a faster pruning of the search space, ultimately reducing the number of LPs to be solved. Unfortunately, these two goals are at odds with one another, with stronger relaxations often requiring additional variables and constraints than \emph{weak} ones. For example, in the context of decision trees, \citet{verwer2019learning} propose a MIP formulation with significantly fewer variables and constraints than the formulation of \citet{bertsimas2017optimal}, but in the process weaken the LP relaxation. As a consequence, neither method consistently dominates the other. 

We note that in the case of MIPs with large numbers of decision variables and constraints, classical decomposition techniques from the Operations Research literature may be leveraged to break the problem up into multiple tractable subproblems of benign complexity. A notable example of a decomposition algorithm is  Benders' \citep{benders1962partitioning}. Bender's decomposition exploits the structure of mathematical programming problems with so-called \emph{complicating variables} which couple constraints with one another and which, once fixed, result in an attractive decomposable structure that is leveraged to speed-up computation and alleviate memory consumption, allowing the solution of large-scale MIPs. To the best of our knowledge, existing approaches from the literature have not sought explicitly strong formulations, neither have they attempted to leverage the potentially decomposable structure of the problem. This is precisely the gap we fill with the present work.

\subsection{Proposed Approach \& Contributions}

Our approach and main contributions in this paper are:
\begin{enumerate}[label=\emph{(\alph*)}]\setlength\itemsep{0em}
    \item We propose an intuitive flow-based MIP formulation \newpv{for learning optimal} classification trees with binary data. Notably, our proposed formulation does not use big-$M$ constraints, which are known to lead to weak LP relaxations. We also show that the resulting LP relaxation is stronger than existing alternatives.
    \item Our proposed formulation is amenable to Bender's decomposition. In particular, binary tests are selected in the master problem and each subproblem guides each datapoint through the tree via a max-flow formulation. We leverage the max-flow structure of the subproblems to solve them efficiently via min-cut procedures. 
    \rev{\item We present the first polyhedral results concerning the convex hull of the feasible region of decision trees: we show that all cuts added in our proposed Benders \newpv{method are \emph{facets}} of this decision tree polytope.}
    \item We conduct extensive \newpv{computational studies, showing} that our formulations improve upon the state-of-the-art MIP algorithms, both in terms of in-sample solution quality (and speed) and out-of-sample performance. 
\end{enumerate}
The proposed modeling and solution paradigm can act as a building block for the faster and more accurate learning of more sophisticated trees. Continuous data can be discretized and binarized to address problems with continuous labels, see \citet{breiman2017classification}. Regression trees can be obtained via minor modifications of the formulation, see e.g., \citet{verwer2017learning}. Fairness and interpretability constraints can naturally be incorporated into the problem, see~\citet{aghaei2019learning}. We leave these studies to future work.

The rest of the paper is organized as follows. We introduce our flow-based formulation and our Bender's decomposition method in \S\ref{sec:DT_Formulation} and \S\ref{sec:Bender}, respectively. We report in \S\ref{sec:Experiments} computational experiments with popular benchmark datasets. 

\section{Decision Tree Formulation}
\label{sec:DT_Formulation}

\subsection{Problem Formulation}

We are given a training dataset $\mathcal T:=\{ {\bm x}^i, y^i \}_{i\in \mathcal I}$ consisting of datapoints indexed in the set $\sets I$. Each row $i\in \sets I$ of this dataset consists of $F$ binary features indexed in the set $\sets F$ and collected in the vector ${\bm x}^i \in \{0,1\}^F$ and a label $y^i$ drawn from the finite set $\sets K$ of classes. We consider the problem of designing an optimal decision tree that minimizes the misclassification rate based on MIP technology.

The key idea behind our model is to augment the decision tree with a single source node~$s$ that is connected to the root node (node 1) of the tree and a single sink node~$t$ connected to all nodes of the tree, see Figure~\ref{fig:sample_tree}. This modification enables us to think of the decision tree as a \emph{directed acyclic graph with a single source and sink node}. Datapoints \emph{flow} from source to sink through a single path and only reach the sink if they are correctly classified (they will face a ``road block'' if incorrectly classified which will prevent the datapoint from traversing the graph at all). Similar to traditional algorithms for learning decision trees, we allow labels to be assigned to internal nodes of the tree. In that case, correctly classified datapoints that reach such nodes are directly routed to the sink node (as if we had a ``short circuit'').

Next, we introduce our notation and conventions that will be useful to present our model. We denote by~$\sets N$ and~$\mathcal L$ the sets of all internal and leaf nodes in the tree, respectively. For each node $n \in \sets N \cup \sets L$, we let $a(n)$ be the direct ancestor of $n$ in the graph. For $n\in \sets N$, let $\ell(n)$ (resp.\ $r(n)$) $\in  \mathcal N\cup \sets L$ represent the left (resp.\ right) direct descendant of node~$n$ in the graph. In particular, we have $a(1)=s$. We will say that we \emph{branch on feature $f\in \sets F$ at node $n\in \sets N$} if the binary test performed at~$n$ asks ``Is $x^i_f=0$''? Datapoint~$i$ will be directed left (right) if the answer is affirmative (negative).

The decision variables for our formulation are as follows.
The variable $b_{nf} \in \{0,1\}$ indicates if we branch on (i.e., perform a binary test on) feature $f \in \sets F$ at node $n \in \sets N$. If $\sum_{f\in F} b_{nf}=0$ for some node $n \in \sets N$, no feature is selected to branch on at that node, and a class is assigned to node~$n$. We let the variable $w_{nk} \in \{0,1\}$ indicate if the predicted class for node $n \in \sets N \cup \sets L$ is $k \in \sets K$. A datapoint $i$ is correctly classified iff it reaches some node $n$ such that $w_{nk}=1$ with $ k=y^i$. Points that arrive at that node and that are correctly classified are directed to the sink. 
For each node $n \in \sets N$ and for each datapoint $i \in \sets I$, we introduce a binary valued decision variable $z^i_{a(n),n}$ which equals 1 if and only if the $i$th datapoint is correctly classified (i.e., reaches the sink) and traverses the edge between nodes $a(n)$ and $n$. We let $z^i_{n,t}$
be defined accordingly for each edge between node $n \in \sets N \cup \sets L$ and sink $t$.


\begin{figure}[t]
\vskip 0.2in
\begin{center}
\centerline{\includegraphics[width=0.4\textwidth]{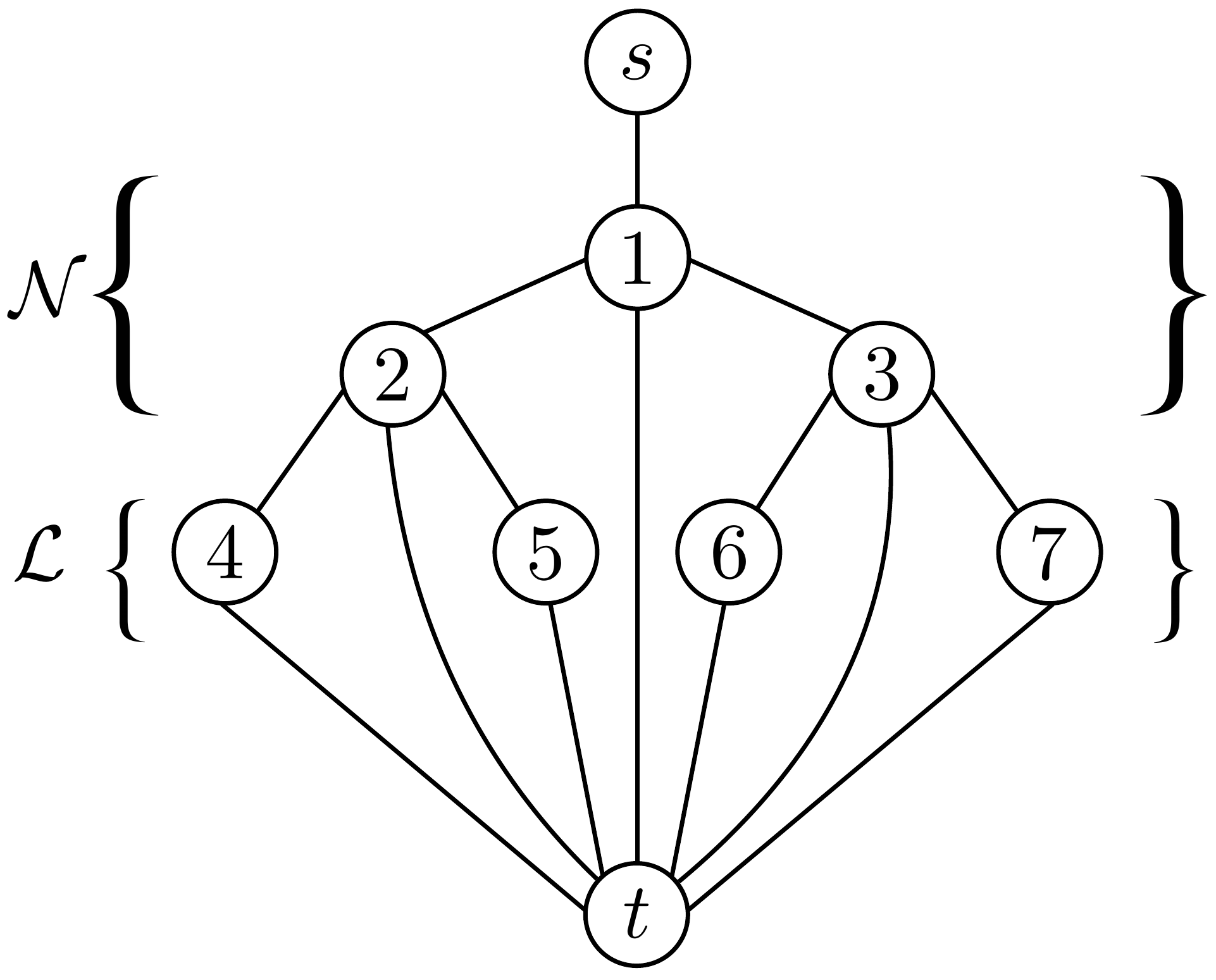}}
\caption{A classification tree of depth 2 viewed as a directed acyclic graph with a single source and sink.}
\label{fig:sample_tree}
\end{center}
\vskip -0.2in
\end{figure}

The flow-based formulation for decision trees reads
\begin{subequations}\label{eq:flow}
\begin{align}
\max \;\; & \displaystyle (1-\lambda)\sum_{i \in \mathcal I} \sum_{n \in \sets N \cup \sets L} z^i_{n,t}-\lambda\sum_{n \in \sets N}\sum_{f \in \sets F}b_{nf} \label{eq:mip_a}\\
\text{s.t.} \;\; & \displaystyle \sum_{f \in \sets F}b_{nf} + \sum_{k \in \sets K}w_{nk}=1   &\hspace{-5cm}  \forall n \in \sets N \label{eq:mip_h}\\
&  \displaystyle \sum_{k \in \sets K}w_{nk}=1 &\hspace{-5cm} \forall n \in \sets L \label{eq:mip_i}\\
& \displaystyle z^i_{a(n),n} =  z^i_{n,\ell(n)} + z^i_{n,r(n)} + z^i_{n,t}  &\hspace{-5cm}  \forall n \in \sets N, i \in \mathcal I \label{eq:mip_b}\\
&  \displaystyle z^i_{a(n),n} = z^i_{n,t} &\hspace{-5cm}   \forall i \in \mathcal I,n \in \sets L \label{eq:mip_c}\\
& \displaystyle z^i_{s,1} \leq 1 &\hspace{-5cm} \forall i \in \mathcal I\label{eq:mip_d}\\
&  \displaystyle z^i_{n,\ell(n)}\leq \sum_{f \in \sets F: x_{f}^i=0}b_{nf} &\hspace{-5cm} \forall n \in \sets N, i \in \mathcal I \label{eq:mip_e}\\
&  \displaystyle z^i_{n,r(n)}\leq \sum_{f \in \sets F: x_{f}^i=1}b_{nf}  &\hspace{-5cm} \forall n \in \sets N, i \in \mathcal I \label{eq:mip_f}\\
&  \displaystyle z^i_{n,t} \leq  w_{nk} &\hspace{-5cm} \forall i \in \mathcal I:\ y^i=k,n \in \sets N\cup \sets L\label{eq:mip_g}\\
&  \displaystyle b_{nf} \in \{0,1\}  &\hspace{-5cm}   \forall n \in \sets N,f \in \sets F \label{eq:mip_j}\\
&  \displaystyle w_{nk} \in \{0,1\}  &\hspace{-5cm}  \forall n \in \sets N \cup \sets L,k \in \sets K \label{eq:mip_k}\\
&  \displaystyle z^i_{a(n),n} \in \{0,1\}  &\hspace{-5cm}  \forall n \in \sets N \cup \sets L,i \in \sets I \label{eq:mip_l}\\
&  \displaystyle z^i_{n,t} \in \{0,1\}  &\hspace{-5cm}  \forall n \in \sets L,i \in \sets I \label{eq:mip_m},
\end{align}
\label{eq:mip}
\end{subequations}
where $\lambda \in [0,1]$ is a regularization weight. The objective \eqref{eq:mip_a} maximizes the total number of correctly classified points $\sum_{i \in \mathcal I} \sum_{n \in \sets N \cup \sets L} z^i_{n,t}$ while minimizing the number of splits $\sum_{n \in \sets N}\sum_{f \in \sets F}b_{nf}$. Thus, $\lambda$ controls the trade-off between these competing objectives, with larger values of lambda corresponding to greater regularization. An interpretation of the constraints is as follows. Constraint~\eqref{eq:mip_h} ensures that at each node we either branch on a feature or assign a class label to it (but not both, the label is only used if we do not branch at that node). Constraint~\eqref{eq:mip_i} guarantees that each leaf has a unique predicted class label. Constraint~\eqref{eq:mip_b} is a flow conservation constraint for each datapoint $i$ and node $n \in \sets N$: it ensures that if a datapoint arrives at a node, it must also leave the node through one of its descendants, or be correctly classified and routed to~$t$. Similarly, constraint~\eqref{eq:mip_c} enforces flow conservation for each node $n \in \sets L$. The inequality constraint~\eqref{eq:mip_d} ensures that at most one unit of flow can enter the graph through the source.  Constraints~\eqref{eq:mip_e} and~\eqref{eq:mip_f} ensure that if a datapoint is routed to the left (right) at node~$n$, then one of the features such that $x^i_f=0$ ($x^i_f=1$) must have been selected for branching at the node. Constraint~\eqref{eq:mip_g} ensures that datapoints routed to the sink node~$t$ are correctly classified.

Given a choice of branching and labeling decisions, $b$ and $w$, each datapoint is allotted one unit of flow which it attempts to guide through the graph from the source node to the sink node. If the datapoint cannot be correctly classified, the flow that will reach the sink (and by extension enter the source) will be zero. In particular note that once the $b$ and $w$ variables have been fixed, optimization of the flows can be done separately for each datapoint. This implies that the problem can be decomposed to speed-up computation, an idea that we leverage in Section~\ref{sec:Bender}. In particular, note that the optimization over flow variables can be cast as a max-flow problem for each datapoint, implying that the integrality constraint on the~$z$ variables can be relaxed to yield an equivalent formulation. We leverage this idea in our computational experiments.

Formulation~\eqref{eq:mip} has several distinguishing features relative to existing MIP formulations for training decision trees
\begin{enumerate}[label=\emph{\roman*)}]\setlength\itemsep{0em}
   \item It does not use big-$M$ constraints.
   \item 
   It includes \emph{flow variables} indicating whether each datapoint is directed to the left or right at each branching node.
   \item It only tracks datapoints that are correctly classified.
   \end{enumerate}


The number of variables and constraints in formulation \eqref{eq:flow} is $\sets O\big(2^{d}(|\sets I|+|\sets F|)\big)$, where $d$ is the tree depth. Thus, its size is of the same order as the one proposed by \citet{bertsimas2017optimal}. Nonetheless, as we discuss in \S\ref{sec:lprelax}, the LP relaxation of formulation \eqref{eq:flow} is tighter, and therefore results in a more aggressive pruning of the search space without incurring in significant additional costs.

\subsection{Strength of the Flow-Based Formulation}\label{sec:lprelax}

We now argue that formulation \eqref{eq:flow}, which we henceforth refer to as \emph{flow-based formulation}, is stronger than existing formulations from the literature. 
The BinOCT formulation of \citet{verwer2019learning} is obtained by aggregating constraints from the OCT formulation of \citet{bertsimas2017optimal} (using big-$M$ constants). As a consequence, its relaxation is weaker. Thus, it suffices to argue that the proposed formulation is stronger than OCT. 
\begin{proposition}\label{prop:OCT}
If $\lambda=0$, then formulation \eqref{eq:flow} has a stronger relaxation than OCT.
\end{proposition}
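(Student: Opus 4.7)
The plan is to show LP-relaxation dominance by constructing, for every feasible LP point of \eqref{eq:flow}, a feasible LP point of OCT with objective value no smaller (so that the OCT relaxation yields at least as large an optimum in the maximization sense), and then to exhibit a single instance witnessing strict inequality so that ``stronger'' holds strictly.

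First I would align the two variable spaces. The branching variables $b_{nf}$ and the leaf-label variables $w_{nk}$ appear in both models, so they map identically. OCT additionally carries a leaf-assignment variable (call it $z^{\text{OCT}}_{i\ell}$) indicating whether datapoint $i$ lands at leaf $\ell$, which I would define from our flow variables by $z^{\text{OCT}}_{i\ell} := z^i_{a(\ell),\ell}$. By \eqref{eq:mip_c} and \eqref{eq:mip_g}, this quantity exactly represents the correctly-classified flow of datapoint $i$ entering leaf~$\ell$, which is the intended interpretation in OCT up to the extra requirement of correct classification (that extra requirement only helps OCT's objective, never hurts feasibility).

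Second, I would verify each OCT constraint under this translation. The label-uniqueness and leaf-activation inequalities of OCT follow directly from \eqref{eq:mip_h}, \eqref{eq:mip_i}, and \eqref{eq:mip_g}. The substantive content lies in OCT's routing inequalities, which take the big-$M$ form ``if datapoint $i$ is assigned to a leaf in the left subtree of $n$, then some feature $f$ with $x^i_f=0$ must have been chosen at $n$.'' I would derive these as nonnegative combinations of the flow-conservation equations \eqref{eq:mip_b}–\eqref{eq:mip_c} and the tight routing inequalities \eqref{eq:mip_e}–\eqref{eq:mip_f} telescoped along the root-to-leaf path: at each ancestor $n$ of $\ell$, summing \eqref{eq:mip_b} over the appropriate descendants and invoking \eqref{eq:mip_e}/\eqref{eq:mip_f} yields $z^{\text{OCT}}_{i\ell} \leq \sum_{f : x^i_f = 0} b_{nf}$ (or its right-branch analogue), which is \emph{strictly tighter} than OCT's big-$M$ counterpart whenever the big-$M$ slack is nonzero. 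This path-wise aggregation is the heart of the argument and is what makes the relaxation at least as strong.

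Third, I would compare objectives. With $\lambda=0$, the objective is $\sum_i \sum_n z^i_{n,t}$; by flow conservation and \eqref{eq:mip_d}, the per-datapoint contribution equals $z^i_{s,1} \in [0,1]$, the flow absorbed at the source, and the constraints \eqref{eq:mip_e}–\eqref{eq:mip_g} ensure this flow is counted only when consistent with both the branching and the leaf label. Under the translation, this quantity coincides with the implied ``correctly classified'' count in OCT (namely $|\sets I|$ minus its misclassification objective) at the translated point, so the OCT LP attains at least the same objective value, establishing the weak dominance. To upgrade it to strict, I would exhibit a small depth-$2$ instance in which OCT's LP admits a fractional solution activating some $z^{\text{OCT}}_{i\ell}$ sustained only by big-$M$ slack, while \eqref{eq:flow}'s LP strictly cuts this solution off through \eqref{eq:mip_e}–\eqref{eq:mip_f}. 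The main obstacle I expect is bookkeeping in the second step: carefully identifying the exact form of OCT's big-$M$ routing constraints and producing the correct nonnegative combination of our flow, routing, and label constraints along each root-to-leaf path so that the derived inequalities genuinely dominate them term by term.
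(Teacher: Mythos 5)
Your overall strategy---mapping LP-feasible points of \eqref{eq:flow} to LP-feasible points of OCT with no smaller objective, with the branching constraints of OCT derived by telescoping the flow-conservation equalities \eqref{eq:mip_b}--\eqref{eq:mip_c} together with \eqref{eq:mip_e}--\eqref{eq:mip_f} along root-to-leaf paths---is the right family of argument, and the path-wise aggregation you identify as ``the heart'' is indeed the same mechanism the paper uses (it appears as the aggregated constraints \eqref{eq:OCT2_right_new}--\eqref{eq:OCT2_left_new} in online companion C). The paper runs the chain in the opposite direction (OCT $\to$ strengthen $\to$ simplify $\to$ project out $\zeta$ $\to$ substitute $=$ FlowOCT), but that is a presentational difference, not a substantive one.

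There is, however, a genuine gap in your second step. OCT requires $\sum_{n \in \sets L} \zeta^i_{a(n),n} = 1$ for every datapoint $i$ (constraint \eqref{eq:OCT_h}): every point must be assigned to exactly one leaf, \emph{whether or not it is correctly classified}. Your proposed identification $\zeta^{i}_{a(\ell),\ell} := z^i_{a(\ell),\ell}$ violates this equality whenever datapoint $i$ carries total flow strictly less than one (in particular, a misclassified point has $z^i \equiv 0$, giving $\sum_{\ell} \zeta^i_{a(\ell),\ell} = 0 \neq 1$), so the mapped point is simply infeasible for OCT's LP and the dominance argument collapses at that point. The paper closes exactly this hole in the ``Projection'' proposition of online companion C: starting from $\zeta = z$, it shows that if $\sum_{n\in\sets L} z^i_{a(n),n} < 1$ then the branching inequalities admit a root-to-leaf path along which every inequality is strict, and it pushes additional assignment mass for $i$ down such paths until \eqref{eq:OCT_h} is tight, verifying that the augmented $\zeta$ remains feasible. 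You would need this (or an equivalent completion argument), and you would also need to reconcile the two formulations' differing conventions at non-branching nodes---the paper does so by restricting the comparison to full trees with $d_n = 1$ everywhere---before the variable spaces can be aligned as cleanly as your first step assumes. Your plan to exhibit an explicit fractional instance for strictness is a reasonable supplement; the paper itself argues strictness only by noting that constraints were genuinely tightened.
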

A formal proof of Proposition~\ref{prop:OCT} is given in online companion C. In the following, we provide some intuition in how formulation \eqref{eq:flow} is stronger.
We work with a simplified version of the formulation of \citet{bertsimas2017optimal} specialized to the case of binary data. We provide this formulation in the online companion B.


\subsubsection{No big-$M$s}

In this section, we argue that the absence of big-$M$ constraints in our formulation induces a stronger formulation. In the OCT formulation, for $i\in \sets I$ and $n\in \sets L$, there are binary variables $\zeta$ such that $\zeta_{a(n),n}^i=1$ if datapoint $i$ is assigned to leaf node $n$ (regardless of whether that point is correctly classified or not), and $\zeta_{a(n),n}^i=0$ otherwise. In addition, the authors introduce a variable $L_n$ that represents the number of missclassified points at leaf node $n$, and this variable is defined via constraints $L_n\geq 0$ and 

\begin{equation*}\label{eq:bigM}
L_n\geq \sum_{i\in \sets I}\zeta_{a(n),n}^i-\sum_{ \begin{smallmatrix} i\in \sets I:\\ y^i=k \end{smallmatrix}}\zeta_{a(n),n}^i-|\sets I|(1-w_{nk})\;\; \forall k\in \sets K.
\end{equation*}
Thus, the number of correctly classified points is $|\sets I|-\sum_{n\in \sets L}L_n$. Note that this is a big-$M$ constraint, with $M=|\sets I|$, which is activated or deactivated depending on whether $w_{nk}=1$ or not.

The LP relaxation induced from counting correctly classified points can be improved. The number of such points, using the variables above, is 
\begin{equation}\label{eq:quadratic}|\sets I|-\sum_{n\in \sets L}L_n=\sum_{n\in \sets L}\sum_{i\in \sets I}\zeta_{a(n),n}^i w_{ny^i}.
\end{equation} The right hand side of \eqref{eq:quadratic} is nonlinear (quadratic). Nonetheless, the quadratic function is \emph{supermodular}, see \citet{nemhauser1978analysis}, and its concave envelop can be described by introducing variables $z_{a(n),n}^i:=\zeta_{a(n),n}^i w_{ny^i}$ via the system
\begin{align*}
    &|\sets I|-\sum_{n\in \sets L}L_n\leq \sum_{n\in \sets L}\sum_{i\in \sets I}z_{a(n),n}^i\\
    &z_{a(n),n}^i\leq \zeta_{a(n),n}^i,\;  z_{a(n),n}^i\leq w_{n y^i} \quad \forall n \in \sets N, \; i\in \sets I.
\end{align*}
The additional variables $z$ are precisely the variables used in formulation~\eqref{eq:flow}. Note that a simple application of this idea would require the introduction of additional variables for each pair $(i,n)$. However, by noting that the desired tree structure can be enforced  using the new variables $z$ only, and the original variables $\zeta$ can be dropped, we achieve this strengthening without incurring the cost of a larger formulation.

\subsubsection{Improved branching constraints}\label{sec:branching}
To correctly enforce the branching structure of the decision-tree, \citet{bertsimas2017optimal} use (after specializing their formulation to the case of binary data) constraints of the form 
\begin{equation}\label{eq:branching}
\begin{array}{l}
    z_{a(m),m}^i\leq 1-b_{nf}\;\; \forall i\in \sets I,  m\in \sets L, n\in\sets {AL}(m), f \in \sets F: x_f^i=1,
\end{array}
\end{equation}
where $\sets {AL}(m)$ denotes the set of ancestors of $m$ whose left branch was followed on the path from the root to $m$. An intrepretation of this constraint is as follows: if datapoint $i$ reaches leaf node $m$, then for all nodes in the path where~$i$ took the left direction, no branching decision~$b_{nf}$ can be made that would cause the point to go right. Instead, we use constraint~\eqref{eq:mip_e}.

We now show that \eqref{eq:mip_e} induces a stronger LP relaxation. First, we focus on the left hand side of \eqref{eq:mip_e}: due to flow conservation constraints \eqref{eq:mip_b}, we find that $$z_{n,\ell(n)}^i=\sum_{m\in \sets L: m\in \sets{LD}(n)}z_{a(m),m}^i$$ where, following the notation of~\citet{bertsimas2017optimal}, $\sets{LD}(n)$ is the set of left descendants of $n$. In particular, the left hand side of constraint \eqref{eq:mip_e} is larger than the left hand side of \eqref{eq:branching}. Now, we focus on the right hand side: from constraints \eqref{eq:mip_h}, we find that 
$$\sum_{f \in \sets F: x_{f}^i=0}b_{nf}=1-\sum_{k\in \sets K}y_{nk}-\sum_{f\in \sets F: x_f^i=1}b_{nf}.$$
In particular, the right hand side of \eqref{eq:mip_e} is smaller than the right hand side of \eqref{eq:branching}. Similar arguments can be made for constraint~\eqref{eq:mip_f}. As a consequence, the linear inequalities for branching induced from formulation \eqref{eq:flow} dominate those proposed by \citet{bertsimas2017optimal}.

\subsubsection{Further Strengthening of the Formulation}
\label{sec:strenghtening}

Formulation \eqref{eq:flow} can be strengthened even more through the addition of cuts.

Let $n\in \sets N$ be any node such that $\ell(n)$ and $r(n)\in \sets L$. Also, let $f\in \sets F$ and define $\sets H\subseteq \sets I$ as any subset of the rows such that: \textit{a)} $i\in \sets H \Rightarrow x^i_f=1$, and \textit{b)} $i,j\in \sets H \Rightarrow y^i\neq y^j$. Intuitively, $\sets H$ is a set of points belonging to different classes that would all be assigned to the right branch if feature $f$ is selected for branching. Then, the constraint 
\begin{equation}\label{eq:multi}\sum_{i\in \sets H}z^i_{n,\ell(n)}\leq 1-b_{n,f}
\end{equation} 
is valid: indeed, if $b_{n,f}=1$, then none of the points in $\sets H$ can be assigned to the left branch; and, if $b_{n,f}=0$, then at most one of the points in $\sets H$ can be correctly classified.

None of the constraint in \eqref{eq:flow} implies \eqref{eq:multi}. As a matter of fact, if all constraints \eqref{eq:multi} are added for all possible combinations of sets $\sets H$, nodes $n$ and features $f$, then variables $w_{nk}$ with $n\in \sets L$ could be dropped from the formulation, along with constraints \eqref{eq:mip_g} and \eqref{eq:mip_i}. Naturally, we do not add all constraints \eqref{eq:multi} a priori, but instead use cuts to enforce them as needed.

\section{\rev{A Benders' \newpv{D}ecomposition \newpv{A}pproach}}
\label{sec:Bender}
The flow-based formulation~\eqref{eq:mip} is effective at reducing the number of branch-and-bound nodes required to prove optimality when compared with existing formulations, and results in a substantial speedup in small- and \newpv{medium-sized} instances, see \S\ref{sec:Experiments}. However, in larger instances, the computational time required to solve the LP relaxations may become prohibitive, impairing its performance in branch-and-bound. 

Recall from \S\ref{sec:DT_Formulation} that, if variables $b$ and $w$ are fixed, then the problem decomposes into $|\mathcal{I}|$ independent \emph{subproblems}, one for each datapoint. Additionally, each problem is a maximum flow problem, for which specialized polynomial-time methods exist. Due to these characteristics, formulation~\eqref{eq:mip} can be naturally tackled using Benders' decomposition, see~\citet{benders1962partitioning}. In what follows, we \rev{describe the Benders' decomposition approach}.

Observe that problem \eqref{eq:flow} can be written in an equivalent fashion by making the subproblems explicit as follows:
\begin{subequations}
\begin{align}
\max \;\;&   \displaystyle (1-\lambda)\sum_{i \in \sets I}g^i(b,w) -\lambda \sum_{n\in \sets N}\sum_{f\in \sets F}b_{nf} \label{eq:mip_MP_a}\\
\text{s.t.} \; \; & \displaystyle \sum_{f \in \sets F}b_{nf} + \sum_{k \in \sets K}w_{nk}=1 &\hspace{-5cm} \forall n \in \sets N \label{eq:mip_MP_b}\\
&  \displaystyle \sum_{k \in \sets K}w_{nk}=1 &\hspace{-5cm}  \forall n \in \sets L \label{eq:mip_MP_c}\\
&  \displaystyle b_{nf} \in \{0,1\}  &\hspace{-5cm} \forall n \in \sets N,f \in \sets F \label{eq:mip_MP_d}\\
&  \displaystyle w_{nk} \in \{0,1\}  &\hspace{-5cm} \forall n \in \sets N \cup \sets L, k \in \sets K \label{eq:mip_MP_e},
\end{align}
\label{eq:mip_MP}
\end{subequations}
where, for any fixed $i\in \sets I$, $w$ and $b$, $g^i(b,w)$ is defined as the optimal objective value of the max-flow problem
\begin{subequations}
\begin{align}
\max \;\;&  \displaystyle \sum_{n \in \sets N \cup \sets L}z^i_{n,t}\label{eq:mip_SP_a}\\
\text{s.t.} \; \; & \displaystyle z^i_{a(n),n} =  z^i_{n,\ell(n)} + z^i_{n,r(n)}+ z^i_{n,t} &\hspace{-5cm} \forall n \in \sets N \label{eq:mip_SP_b}\\
&  \displaystyle z^i_{a(n),n}=z^i_{n,t} &\hspace{-5cm} \forall n \in \sets L \label{eq:mip_SP_c}\\
&  \displaystyle z^i_{s,1}\leq c^i_{s,1}(b,w)  \label{eq:mip_SP_d}\\
&  \displaystyle z^i_{n,\ell(n)} \leq c^i_{n,\ell(n)}(b,w) &\hspace{-5cm}  \forall n \in \sets N \label{eq:mip_SP_e}\\
&  \displaystyle z^i_{n,r(n)} \leq c^i_{n,r(n)}(b,w) &\hspace{-5cm} \forall n \in \sets N \label{eq:mip_SP_f}\\
&  \displaystyle z^i_{n,t} \leq  c^i_{n,t}(b,w) &\hspace{-5cm} \forall n \in \sets N \cup \sets L \label{eq:mip_SP_g}\\
&  \displaystyle z^i_{a(n),n} \geq 0 &\hspace{-5cm} \forall n \in \sets N \cup \sets L \label{eq:mip_SP_h}\\
&  \displaystyle z^i_{n,t} \geq 0 &\hspace{-5cm} \forall n \in \sets N \cup \sets L \label{eq:mip_SP_i}.
\end{align}
\label{eq:mip_SP}
\end{subequations}
In formulation \eqref{eq:mip_SP} we use the shorthand $c_{nn'}(b,w)$ to represent upper bounds on the decision variables $z$. These values can be interpreted as edge capacities in the flow problem, and are given as $c^i_{s,1}(b,w):=1$ for all $n \in \sets N $, 
 $c^i_{n,\ell(n)}(b,w):= \sum_{f \in \sets F: x_{f}^i=0}b_{nf}$ and
$ c^i_{n,r(n)}(b,w):= \sum_{f \in \sets F: x_{f}^i=1}b_{nf}$ for all $n \in \sets N \cup \sets L$, and finally $c^i_{n,t}(b,w):=w_{ny^i}$.
Note that~$g^i(b,w)=1$ if point $i$ is correctly classified given the tree structure and class labels induced by $(b,w)$. 

From the well-known max-flow/min-cut duality, we find that $g^i(b,w)$ also equals the optimal value of the dual of the above max-flow problem, which is expressible as
\begin{subequations}
\begin{align}
\min \;\;& \displaystyle c^i_{s,1}(b,w)q_{s,1}+\sum_{n \in \sets N}c^i_{n,\ell(n)}(b,w)q_{n,\ell(n)} + \sum_{n \in \sets N}c^i_{n,r(n)}(b,w)q_{n,r(n)}  +\sum_{n \in \sets N \cup \sets L}c^i_{n,t}(b,w)q_{n,t}  \label{eq:mip_DSP_a}\\
\text{s.t.}\; \; & \displaystyle q_{s,1} + p_1  \geq 1\label{eq:mip_DSP_b}\\
&  \displaystyle q_{n,\ell(n)}+p_{\ell(n)} - p_n \geq 0 &\hspace{-5cm} \forall n \in \sets N\quad \label{eq:mip_DSP_c}\\
&  \displaystyle q_{n,r(n)}+p_{r(n)} - p_n \geq 0 &\hspace{-5cm} \forall n \in \sets N\quad\label{eq:mip_DSP_d}\\
&  \displaystyle q_{n,t}- p_n \geq 0 &\hspace{-5cm} \forall n \in \sets N \cup \sets L\quad\label{eq:mip_DSP_e}\\
&  \displaystyle q_{s,1} \geq 0 &\hspace{-5cm} \label{eq:mip_DSP_f}\quad\\
&  \displaystyle q_{n,\ell(n)},q_{n,r(n)} \geq 0 &\hspace{-5cm} \forall n \in \sets N \quad\label{eq:mip_DSP_g}\\
&  \displaystyle q_{n,t} \geq 0 &\hspace{-5cm} \forall n \in \sets N \cup \sets L \quad\label{eq:mip_DSP_h}.
\end{align}
\label{eq:mip_DSP}
\end{subequations}
Problem~\eqref{eq:mip_DSP} is a minimum cut problem, where variable $p_n$ is one if and only if node $n$ is in the source set (we implicitly fix $p_s=1$), and variable $q_{i,j}$ is one if and only if arc $(i,j)$ is part of the minimum cut. Note that the feasible region \eqref{eq:mip_DSP_b}-\eqref{eq:mip_DSP_h} of the minimum cut problem does not depend on the variables $(b,w)$; we denote this feasible region by $\sets P$.

We can now reformulate the master problem~\eqref{eq:mip_MP} as follows:
\begin{subequations}
\begin{align}
 \max \;\;& \displaystyle (1-\lambda)\sum_{i \in \sets I}g^i -\lambda\sum_{n \in \sets N}\sum_{f \in \sets F}b_{nf} \label{eq:mip_MP2_a}\\
 \text{s.t.}\;\;&  
 g^i \leq  \displaystyle c^i_{s,1}(b,w)q_{s,1}+\sum_{n \in \sets N}c^i_{n,\ell(n)}(b,w)q_{n,\ell(n)}
+ \sum_{n \in \sets N}c^i_{n,r(n)}(b,w)q_{n,r(n)} \nonumber \\
&  \qquad \quad \qquad +\sum_{n \in \sets N \cup \sets L}c^i_{n,t}(b,w)q_{n,t} &\hspace{-5cm}  \newpv{ \forall q: (p,q)\in \sets P }  \label{eq:mip_MP2_b}\\
&  \displaystyle \sum_{f \in \sets F}b_{nf}+ \sum_{k \in \sets K}w_{nk}=1   &\hspace{-5cm} \forall n \in \sets N  \label{eq:mip_MP2_c}\\
&  \displaystyle \sum_{k \in \sets K}w_{nk}=1  &\hspace{-5cm}  \forall n \in \sets L \label{eq:mip_MP2_d}\\
&  \displaystyle g^i \leq 1   & \forall i \in \sets I \label{eq:mip_MP2_e}\\
&  \displaystyle b_{nf} \in \{0,1\}  &\hspace{-5cm} \forall n \in \sets N,f \in \sets F \label{eq:mip_MP2_f}\\
&  \displaystyle w_{nk} \in \{0,1\}  &\hspace{-5cm} \forall n \in \sets N\cup \sets L,k \in \sets K . \label{eq:mip_MP2_g}
\end{align}
\label{eq:mip_MP2}
\end{subequations}

In the above formulation, we have added constraint~\eqref{eq:mip_MP2_e} to make sure we get bounded solutions in the relaxed master problem. \rev{\newpv{Note that constraint~\eqref{eq:mip_MP2_b} can be relaxed to only hold $\forall q: (p,q)\in \text{ext}(\sets P)$, where $\text{ext}(\sets P)$ denotes the extreme points of $\sets P$}. These extreme points correspond to cuts induced by \eqref{eq:mip_DSP_b}-\eqref{eq:mip_DSP_h} \newpv{in the graph}. Moreover, \newpv{observe} that equalities \eqref{eq:mip_MP2_c} \newpv{and} \eqref{eq:mip_MP2_d} can be relaxed to \newpv{inequalities} without loss of generality. \newpv{Indeed, in} any feasible solution where $\sum_{f\in\sets F}b_{nf}+\sum_{k\in \sets K}w_{nk}<1$ for some $n\in \sets N$, it is possible to set any \newpv{$w_{nk}$ to unity} to obtain a feasible solution with identical objective value \newpv{and} where \eqref{eq:mip_MP2_c} is satisfied at equality. We define $\sets H_=$ as the set of $(b,w,g)$ satisfying constraints \eqref{eq:mip_MP2_b}-\eqref{eq:mip_MP2_g}, and define $\sets H_\leq$ as the set of points satisfying the inequality version of  \eqref{eq:mip_MP2_b}-\eqref{eq:mip_MP2_g}. In the next section we discuss effective implementations of problem~\eqref{eq:mip_MP2}.}

\section{\newpv{Generating Strong Cuts on the Fly}}

Formulation~\eqref{eq:mip_MP2} \rev{contains an exponential number of inequalities \eqref{eq:mip_MP2_b}, and needs to be} implemented using row generation, where\newpv{in} constraint\newpv{s}~\eqref{eq:mip_MP2_b} 
\newpv{are} initially dropped \newpv{and added as} cuts on the fly \newpv{during optimization}. 
Row generation can be implemented in modern MIP optimization solvers via callbacks, by adding lazy constraints at relevant nodes of the branch-and-bound tree. Identifying which constraint \eqref{eq:mip_MP2_b} to add can in general be done by solving a minimum cut problem, \rev{and could in principle be solved via well-known} algorithms, \rev{such as}~\citet{goldberg1988new} and \citet{hochbaum2008pseudoflow}.

\rev{\newpv{Row generation} methods for integer programs may require a long time to converge to an optimal solution if each cut added is weak for the feasible region of interest, as illustrated for example by the poor performance of the pure cutting plane algorithm of \citet{gomory1958outline}. Nonetheless, cutting planes have been \newpv{extremely} successful \newpv{at} solving integer programs when the cuts added are strong or, ideally, ``facet-defining'' for the convex hull of the feasible region. \newpv{Formally, facet-defining cuts are those cuts which are necessary to describe the convex hull}. For example, integer programming formulations for traveling salesman problems contain an exponential number of ``subtour elimination'' constraints that are added on the fly as cuts. Nonetheless, all such inequalities are facet defining for the convex hull of the feasible region, see \citet{grotschel1985polyhedral}, and \newpv{cutting plane methods} are able to find \newpv{provably} optimal tours to problems with tens of thousands of variables or more \cite{applegate2009certification}.}
\rev{Unfortunately, \newpv{as illustrated by the following example}, several of \newpv{the} inequalities \eqref{eq:mip_MP2_b} may actually be weak for $\text{conv}(\sets H_=)$ and $\text{conv}(\sets H_\leq)$, where $\text{conv}(\sets H)$ denotes the convex hull of $\sets H$. 
\begin{example}\label{ex:counterexample}
		Consider a\newpv{n instance of Problem~\eqref{eq:mip_MP2} with} a depth \newpv{$d=1$ decision-tree} (\newpv{i.e., $\sets N=\{1\}$ and $\sets{L}=\{2,3\}$}) \newpv{and a dataset involving} a single feature ($\sets F =\{1\}$). Consider datapoint $i$ such that \newpv{$x_1^i=0$ and $y^i=k$}. \newpv{Suppose that the solution to the master problem is such that we branch on (the unique) feature at node 1 and predict class $k'\neq k$ at node 2. Then, datapoint $i$ is routed left at node 1 and is misclassified. A valid min-cut for the resulting graph includes all arcs incoming into the sink, } i.e., \newpv{$q_{n,n'}=1$ iff $n'= t$. The associated cut~\eqref{eq:mip_MP2_b} reads} 
		\begin{equation}\label{eq:weak}g^i\leq w_{1k}+w_{2k}+w_{3k}.\end{equation}
		Intuitively, \eqref{eq:weak} states that datapoint $i$ can be correctly classified if its class label is assigned to at least one node, and is valid for $\text{conv}(\sets H_=)$ and $\text{conv}(\sets H_\leq)$. However, since datapoint $i$ cannot be routed to node $3$, the stronger inequality
		\begin{equation}\label{eq:strong}g^i\leq w_{1k}+ w_{2k}\end{equation} is valid for $\text{conv}(\sets H_=)$, $\text{conv}(\sets H_\leq)$ and dominates \eqref{eq:weak}.~\qed
\end{example}

Therefore, an implementation of formulation \eqref{eq:mip_MP2} using general purpose min-cut algorithms to identify constraints to add may perform poorly.  \newpv{This motivates us to develop} a tailored algorithm that exploits the structure of the graph induced by capacities~$c(b,w)$. \newpv{As we will show, our algorithm exhibits substantially faster runtimes than general purpose min-cut methods and returns inequalities that are never dominated, resulting in faster convergence of the Benders' decomposition approach.}} 

\rev{Algorithm~\ref{alg:cut} shows the proposed \newpv{procedure}, which can be called at \emph{integer nodes} of the branch-and-bound tree. \newpv{For notational convenience}, we define $b_{n,\ell(n)}=b_{n,r(n)}=0$ for leaf nodes $n\in \sets L$. Since at each iteration in the main loop (lines~\ref{line:ini}-\ref{line:end}), the value of $n$ is updated to a descendant of $n$, the algorithm terminates in a most $\sets O(d)$ iterations, where~$d$ is the depth of the tree -- since $|\sets N\cup \sets L|$ is $\sets O(2^d)$, the complexity is logarithmic in the size of the tree.} Figure~\ref{fig:subproblem_tree} illustrates graphically Algorithm~\ref{alg:cut}. \rev{We now prove that Algorithm~\ref{alg:cut} is indeed a valid \emph{separation algorithm}.}

\newpv{
\begin{algorithm}[h]
		\caption{Separation \newpv{procedure}}
		\label{alg:cut}
		 \textbf{Input:} $(b,w,g) \text{ satisfying~\eqref{eq:mip_MP2_c}-\eqref{eq:mip_MP2_g}; } \newline
		 \hspace*{\algorithmicindent} i~\in~\sets I:  \text{ datapoint used to generate the cut.}$ \newline
		 \textbf{Output:} $-1$ if all constraints \eqref{eq:mip_MP2_b} are satisfied;  \newline 
		 \hspace*{\algorithmicindent} values \newpv{for min-cut} $q$ otherwise.
	    \begin{algorithmic}[1]
 		\State \textbf{if }$g^i=0$ \textbf{ return } \newpv{$-1$}\label{line:simple}
 		\State \textbf{Initialize} $q\leftarrow 0$ \hfill \Comment{No arcs in the cut}
 		\State \textbf{Initialize} $n\leftarrow 1$ \hfill \Comment{Current node=root}
 		\State \textbf{Initialize} $\sets S\leftarrow \{s\}$ \hfill \Comment{$\sets S$ is the source set of the cut}
 		\Loop\label{line:ini}
 		\State $\sets S\leftarrow \sets S\cup\{n\}$
 		\If{$c_{n,\ell(n)}^i(b,w)=1$}  \label{lin:sub-start}
 		\State $q_{n,r(n)}\leftarrow 1$ \hfill
 		\Comment{Arcs to the right are in the cut}
 		\State $q_{n,t}\leftarrow 1$ \hfill \Comment{Arcs to the sink are in the cut}
 		\State $n\leftarrow \ell(n)$ \hfill \Comment{Datapoint $i$ is routed left}
 		\ElsIf{$c_{n,r(n)}^i(b,w)=1$} 
 		\State $q_{n,\ell(n)}\leftarrow 1$ \hfill
 		\Comment{Arcs to the left are in the cut}
 		\State $q_{n,t}\leftarrow 1$ \hfill \Comment{Arcs to the sink are in the cut}
 		\State $n\leftarrow r(n)$ \hfill \Comment{Datapoint $i$ is routed right} \label{lin:sub-end}
 		\ElsIf{$c_{n,t}^i(b,w)=0$}
 		\State $q_{n,\ell(n)}\leftarrow 1$ \hfill
 		\Comment{Arcs to the left are in the cut}
 		\State $q_{n,r(n)}\leftarrow 1$ \hfill \Comment{Arcs to the right are in the cut}
 		\State \newpv{$q_{n,t}\leftarrow1$} \hfill \Comment{Datapoint $i$ is misclassified}
 		\State \textbf{return }$q$ \label{line:violated}
 		\Else \newpv{\Comment{$c_{n,t}^i(b,w)=1$ in this case}}
 		\State \textbf{return }$-1$ \newpv{\hfill \Comment{$i$ is correctly classified}} \label{line:satisfied}
 		\EndIf
 		\EndLoop \label{line:end}
	\end{algorithmic}
\end{algorithm}
}
\begin{figure}[tb]
\vskip 0.2in
\begin{center}
\centerline{\includegraphics[width=0.5\textwidth]{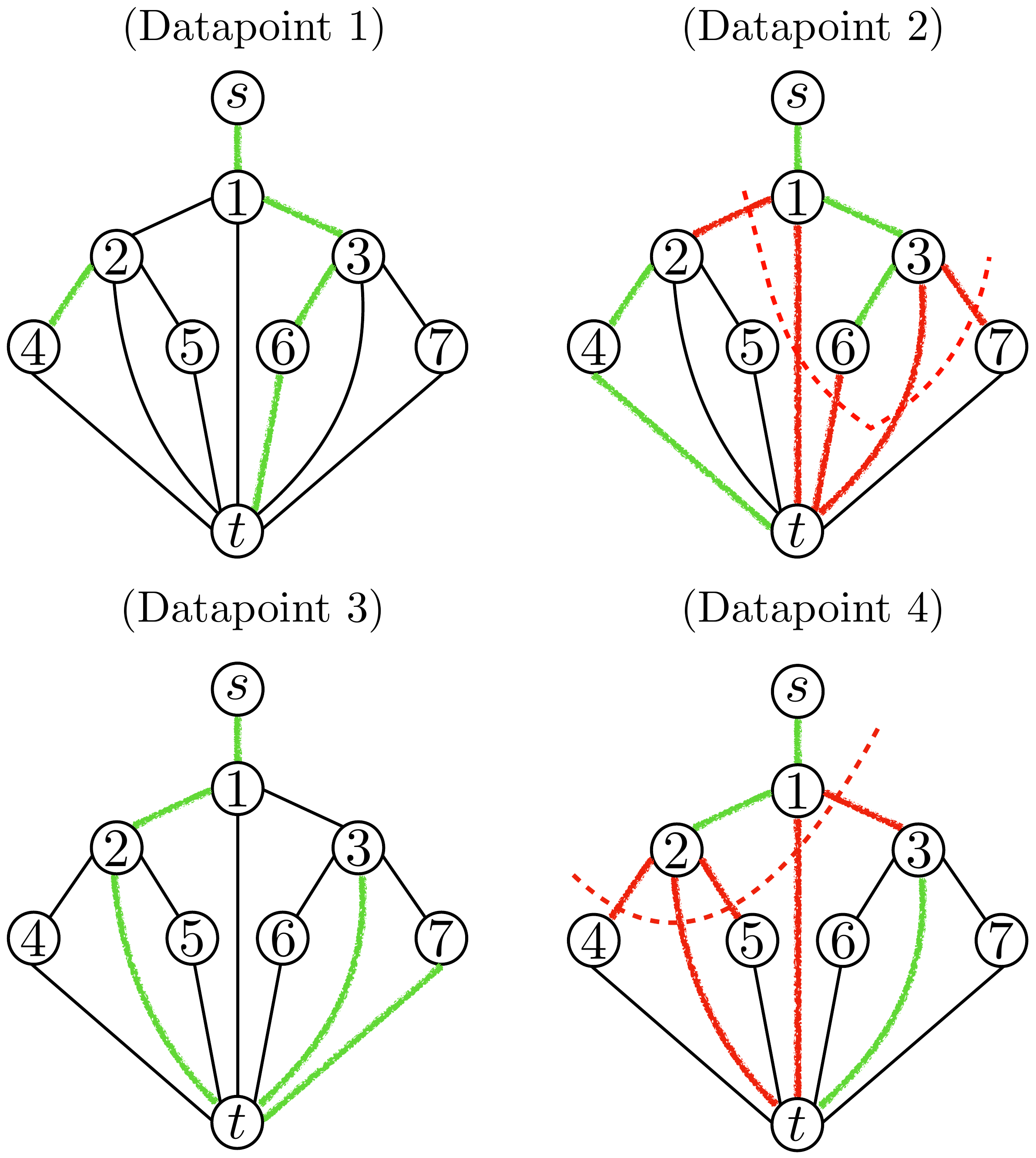}}
\caption{Pictorial description of \rev{Algorithm~\ref{alg:cut}}. Green arcs \newpv{$(n,n')$} have capacity \rev{$c_{n,n'}^i(b,w)=1$} (and others capacity 0). Red arcs are those in the minimum cut. On the left, examples with minimum cut value equal to 1 (constraint~\eqref{eq:mip_MP2_b} is satisfied). On the right, examples with minimum cut values of 0 (new constraints added).}
\label{fig:subproblem_tree}
\end{center}
\vskip -0.2in
\end{figure}

\begin{proposition}\label{prop:algorithm}
 \rev{Given $i\in \sets I$ and $(b,w,g)$ satisfying~\eqref{eq:mip_MP2_c}-\eqref{eq:mip_MP2_g}, Algorithm~\ref{alg:cut} either finds a violated inequality \eqref{eq:mip_MP2_b} or proves that all such inequalities are satisfied}.
\end{proposition}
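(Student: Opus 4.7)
The plan is to view Algorithm~\ref{alg:cut} as an explicit construction of a candidate $(s,t)$-min-cut in the capacitated graph induced by the fixed $(b,w)$, and then to verify correctness via max-flow/min-cut duality. A key structural observation I would establish first is that, since $(b,w)$ are binary and satisfy \eqref{eq:mip_MP2_c}-\eqref{eq:mip_MP2_d}, at every internal node $n\in\sets N$ exactly one of the quantities in $\{b_{nf}\}_{f\in\sets F}\cup\{w_{nk}\}_{k\in\sets K}$ equals unity. Consequently, among the three outgoing capacities $c_{n,\ell(n)}^i(b,w)$, $c_{n,r(n)}^i(b,w)$, $c_{n,t}^i(b,w)$, at most one equals $1$ and the remaining two vanish. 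This dichotomy determines a unique \emph{active} outgoing arc at each node, which is precisely the arc the algorithm traverses when walking down the tree.

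With this observation in hand, I would dispose of the cases where the algorithm returns $-1$. If $g^i=0$ at line~\ref{line:simple}, every inequality~\eqref{eq:mip_MP2_b} holds trivially because its right-hand side is nonnegative. If termination occurs at line~\ref{line:satisfied}, then the path $s\to 1\to\cdots\to n$ traced by the algorithm, together with $c_{n,t}^i(b,w)=1$ at the terminal node $n$, supports a unit $s$-$t$ flow along arcs of capacity $1$. By max-flow/min-cut duality, every cut then has value at least $1$, and since $g^i\leq 1$ by \eqref{eq:mip_MP2_e}, every inequality~\eqref{eq:mip_MP2_b} is satisfied.

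The remaining and main case is termination at line~\ref{line:violated}. Letting $\sets S$ denote the set of nodes added to the source side by the algorithm, namely $s$ together with every node visited by the main loop up to and including the terminal node $n^*$, I would set $p_n:=\mathbf 1[n\in\sets S]$ and verify $(p,q)\in\sets P$ by direct inspection of \eqref{eq:mip_DSP_b}-\eqref{eq:mip_DSP_h}: at every visited node the algorithm sets to $1$ precisely those $q$-variables corresponding to arcs leaving $\sets S$, while constraints for arcs with both endpoints inside $\sets S$ or both outside $\sets S$ are satisfied vacuously. I would then evaluate the cut value~\eqref{eq:mip_DSP_a}: each arc $(n,n')$ with $q_{n,n'}=1$ corresponds, by the structural observation above, to a capacity equal to $0$ (the off-path child arc and the sink arc at each internal visited node, and all three outgoing arcs of $n^*$, all of whose capacities vanish by definition of the terminating branch). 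The right-hand side of the associated inequality~\eqref{eq:mip_MP2_b} therefore equals $0$, while $g^i>0$ (since line~\ref{line:simple} was not triggered), so the returned cut is genuinely violated.

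The main obstacle is the bookkeeping in the last paragraph: one must check that the set $\{(n,n'):q_{n,n'}=1\}$ produced by the algorithm coincides with the arcs $\delta^+(\sets S)$ leaving the source side and, simultaneously, that every such arc has zero capacity. Both facts rest on the structural dichotomy from the first paragraph---the unique active arc at each visited internal node points \emph{into} $\sets S$, forcing the other two outgoing arcs to have capacity $0$ and to leave $\sets S$. Once this is pinned down, both the feasibility check $(p,q)\in\sets P$ and the evaluation of the cut value reduce to a routine case analysis over the three branches of the main loop.
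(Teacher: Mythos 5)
Your proposal is correct and follows essentially the same route as the paper's proof: the nonnegativity of the right-hand side disposes of the $g^i=0$ case, the observation that constraints~\eqref{eq:mip_MP2_c}--\eqref{eq:mip_MP2_d} force at most one unit-capacity outgoing arc per node is exactly the paper's key structural fact, and the case analysis over the three termination branches mirrors the paper's connected-component/depth-first-search argument (your explicit verification that $(p,q)\in\sets P$ with cut value $0$ is just a slightly more formal rendering of the paper's ``add all zero-capacity arcs to the min-cut at zero cost''). No gaps.
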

\begin{proof}
\rev{Note that the right hand side of \eqref{eq:mip_MP2_b}, which corresponds to the capacity of a cut \newpv{in the graph}, is nonnegative. Therefore, if $g^i=0$ (line~\ref{line:simple}), all inequalities are automatically satisfied.} Since $(b,w)$ is integer, all \newpv{arc} capacities in formulations~\eqref{eq:mip_SP} and~\eqref{eq:mip_DSP}  are either~0 or~1. \rev{Moreover, since $g^i\leq 1$, we find that either the value of a minimum cut is~$0$ and there exists a violated inequality, or the value of a minimum cut is at least~$1$ and there is no violated inequality. Finally, there exists a 0-capacity cut if and only if $s$ and~$t$ belong to different connected components in the graph induced by~\newpv{$c(b,w)$}. 

\newpv{The component connected to $s$ can be found using depth-first search. For any fixed $n \in \sets N \cup \sets L$, constraints~\eqref{eq:mip_MP2_c}-\eqref{eq:mip_MP2_d} and the definition of $c(b,w)$ imply that at most one arc $(n,n')$ outgoing from~$n$ can have capacity 1. If arc $(n,n')$ has capacity 1 and $n' \neq t$ (lines~\ref{lin:sub-start}-\ref{lin:sub-end}), then $n'$ can be added to the component connected to~$s$ (set $\sets S$) and all other outgoing arcs from $n$ (which have capacity of 0) can be added to the min-cut (at zero cost). If all outgoing arcs from \newpv{$n$} have capacity $0$, they can be added to the min-cut. In that case, the connected components to $s$ end at node $n$. If the unique outgoing arc from node $n$ that has capacity 1 is $(n,t)$, then $s$ and $t$ are in the same connected component and the value of the minimum cut is at least 1. Therefore, the connected component~\newpv{$\sets S$} to $s$ corresponds to a path from $s$ to a node $n$ where no branching is performed: if $c_{n,t}^i=1$ then $t$ is also in this connected component and no cut is added (line~\ref{line:satisfied}): otherwise, a violated cut has been \newpv{found} (line \ref{line:violated}).}
}
\end{proof}

\rev{In addition \newpv{to} providing a very fast method \newpv{for generating} cuts at integer nodes of a branch-and-bound tree, Algorithm~\ref{alg:cut} is also guaranteed to generate facet-defining cuts of \newpv{${\textrm{conv}}(H_\leq)$}. \newpv{Such cuts are never dominated.}}

\begin{theorem}\label{theo:facet}
All violated inequalities found by Algorithm~\ref{alg:cut} are facet-defining for \rev{$\text{conv}(\sets H_\leq)$}.
\end{theorem}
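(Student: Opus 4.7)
I would use the standard indirect characterization of facets: assuming $\text{conv}(\sets H_\le)$ is full-dimensional, the cut returned by Algorithm~\ref{alg:cut} is facet-defining if and only if every valid inequality tight on the induced face is a positive scalar multiple of it.

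\emph{Step 1 (make the returned inequality explicit).} Let $\sets S=\{s,m_0,m_1,\ldots,m_k\}$ be the source side of the cut returned by Algorithm~\ref{alg:cut}, with $m_0=1$ the root and $m_k=n^*$ the stopping node. For each internal path node $m_j$ ($j<k$), let $F_j^{\text{opp}}\subseteq\sets F$ denote the features whose values on datapoint $i$ route it in the direction opposite to the one the algorithm took at $m_j$. Translating the cut into a linear inequality in $(b,w,g)$ yields
\begin{equation}\label{eq:plancut}
g^i \;\le\; \sum_{j=0}^{k-1}\Big(\sum_{f\in F_j^{\text{opp}}} b_{m_j,f} + w_{m_j,y^i}\Big) \;+\; \sum_{f\in\sets F} b_{n^*,f} \;+\; w_{n^*,y^i},
\end{equation}
where the sum $\sum_f b_{n^*,f}$ is vacuous when $n^*\in\sets L$ by the convention $b_{n,\ell(n)}=b_{n,r(n)}=0$ adopted just before Algorithm~\ref{alg:cut}. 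Validity of \eqref{eq:plancut} is inherited from max-flow/min-cut duality via Proposition~\ref{prop:algorithm}.

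\emph{Step 2 (full dimensionality).} Because \eqref{eq:mip_MP2_c}--\eqref{eq:mip_MP2_d} are relaxed to inequalities in $\sets H_\le$, no implicit equality is imposed on any variable. I would exhibit $D+1$ affinely independent integer feasible points, starting from the origin and flipping one coordinate at a time, to conclude $\dim\text{conv}(\sets H_\le)=D$.

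\emph{Step 3 (uniqueness up to scaling).} Let $F:=\{x\in\text{conv}(\sets H_\le):\text{\eqref{eq:plancut} holds with equality}\}$ and suppose $\gamma^{\top}(b,w,g)\le\delta$ is any valid inequality that is tight on every integer point of $F$. I would show $(\gamma,\delta)=\mu(\alpha,0)$ for some $\mu>0$, where $\alpha$ is the coefficient vector of \eqref{eq:plancut}, by coordinate-wise perturbations producing paired integer points in $F$: for each coordinate \emph{absent} from \eqref{eq:plancut}---namely $g^{i'}$ with $i'\ne i$, off-path $b_{n,f}$ and $w_{n,k}$, same-direction branching variables $b_{m_j,f}$ with $f\notin F_j^{\text{opp}}$, and labels $w_{n,k}$ with $k\ne y^i$---I construct two integer points in $F$ differing only in that coordinate, forcing the corresponding entry of $\gamma$ to vanish; for coordinates appearing in \eqref{eq:plancut} I build pairs in $F$ that swap one such coordinate for another to equate their $\gamma$-entries to a common value $\mu>0$; and finally, an integer point in $F$ with $g^i=1$ pins down $\gamma_{g^i}=-\mu$ and $\delta=0$.

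\emph{Main obstacle.} The most delicate case is the terminal node $n^*\in\sets N$: all features $\{b_{n^*,f}\}_{f\in\sets F}$ appear with coefficient $1$ in \eqref{eq:plancut}, but the relaxed constraint $\sum_f b_{n^*,f}+\sum_k w_{n^*,k}\le 1$ permits only one of these to be unity in any integer point. Isolating a single $b_{n^*,f}$ by a perturbation that stays inside $F$ therefore requires coupled changes of $g^i$ together with either $w_{n^*,y^i}$ or another $b_{n^*,f'}$, and organizing these coupled moves so that they deliver the required affinely independent collection in $F$ is the crux of the argument. A secondary technicality is to check that the same-direction and opposite-direction feature configurations used in each perturbation do exist for the dataset at hand; this is automatic from the fact that Algorithm~\ref{alg:cut} only returns a cut at integer solutions where the necessary branching features are actually present along the path.
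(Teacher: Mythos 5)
Your overall strategy is legitimate but takes a genuinely different route from the paper, and that route leaves the hardest part undone. The paper uses the direct characterization of facets: it writes the generated cut explicitly (its analogue of your Step~1, cf.~\eqref{eq:facet}) and then exhibits a full list (Table~\ref{tab:affine}) of affinely independent points of $\sets H_\leq$ at which the cut is tight --- a ``baseline'' point given by the branching decisions along the path $\sets S$ with everything else zeroed out, followed by systematic one- or two-coordinate perturbations (relabeling leaves, branching at off-path nodes, deleting or swapping the branching feature at path nodes, correctly classifying each datapoint $j\neq i$). You propose the indirect characterization (every valid inequality tight on the face is a positive multiple of the cut). The two are equivalent, and your Step~1 rendering of the cut is correct --- in fact more faithful to Algorithm~\ref{alg:cut} than the paper's own \eqref{eq:facet}, since you retain the $w_{m_j,y^i}$ terms at internal path nodes and allow the terminal node $n^*$ to be internal, whereas the paper restricts to master solutions with $\sum_{k}w_{nk}=0$ on $\sets N$ so that $n^*=n(i)\in\sets L$.

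The genuine gap is that Step~3 is a description of what must be constructed, not the construction itself: the coordinate-wise perturbation pairs you invoke are exactly the combinatorial content the paper's table supplies, and you explicitly defer the one case you identify as the crux (the terminal node $n^*\in\sets N$, where every $b_{n^*,f}$ carries coefficient $1$ yet at most one can be set in an integer point). That case is resolvable: setting $b_{n^*,f}=1$ raises the right-hand side by one and routes $i$ to a descendant leaf whose label variable does \emph{not} appear in the cut, so one labels that leaf with $y^i$ and sets $g^i=1$ to restore tightness --- essentially rows 6--8 of the paper's construction adapted to a dead-end node --- but as written your argument stops at naming the obstacle rather than overcoming it. Two secondary omissions: the full-dimensionality of $\text{conv}(\sets H_\leq)$ in Step~2 is asserted, not proved (it does hold, via the origin, the unit vectors in $b$ and $w$, and points of the form $w_{1y^i}=g^i=1$); and because your cut genuinely involves the coordinates $w_{nk}$ for $n\in\sets N$, which the paper's construction never varies, your version must also supply the pairs forcing $\gamma_{w_{nk}}=0$ off the path and $\gamma_{w_{m_j,y^i}}=\mu$ on it.
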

We defer the proof of Theorem~\ref{theo:facet} to the supplemental material~\ref{appendix_sec:polyhedral}.

\rev{
	\begin{example}[\newpv{Example~\ref{ex:counterexample} Continued}]
		In the instance considered in Example~\ref{ex:counterexample}, if $b_{1f}=1$ and $w_{2k}=0$, then the cut generated by the algorithm (\newpv{$q_{1,r(1)} = q_{1,t}$=$q_{2,t}=1$}) is precisely \eqref{eq:strong}. If $b_{1f}=0$ and $w_{1k}=0$ \newpv{(which is feasible in ${\textrm{conv}}(H_\leq)$)} in the \newpv{solution to the master problem} used to generate the cut, then the cut \newpv{returned} by Algorithm~\ref{alg:cut} (\newpv{$q_{1,\ell(1)} = q_{1,t}=1$}) is
		$$g^i\leq w_{1k}+b_{1f}.$$
		 For all other possible values of $(b,w)$, Algorithm~\ref{alg:cut} does not find a violated cut.  
	\end{example}
}

\section{Experiments}
\label{sec:Experiments}
\paragraph{Approaches and Datasets.} We evaluate our two approaches on eight publicly available datasets. The number of rows ($\sets I$), number of one-hot encoded features ($\sets F$), and number of class labels ($\sets K$) for each dataset are given in Table~\ref{tab:datasets}.
We compare the flow-based formulation (\texttt{FlowOCT}) and its Benders' decomposition (\texttt{Benders}) 
against the formulations proposed by~\citet{bertsimas2017optimal} (\texttt{OCT}) and~\citet{verwer2019learning} (\texttt{BinOCT}). 
As the code used for OCT is not publicly available, we implemented the corresponding formulation (adapted for the case of binary data). The details of this implementation are given in the online companion B.
\begin{table}[h]
\caption{Datasets used in the experiments.}
\label{tab:datasets}
\vskip 0.15in
\begin{center}
\begin{tabular}{lccc}
\hline
Dataset                          & $|\sets I|$ & $|\sets F|$ & $|\sets K|$ \\ \hline
monk3                         & 122                  & 15                  & 2                \\
monk1                       & 124                  & 15                  & 2                \\ 
monk2                         & 169                  & 15                  & 2                \\ 
house-votes-84     & 232                  & 16                  & 2                \\
balance-scale                    & 625                  & 20                  & 3                \\ 
tic-tac-toe                     & 958                  & 27                  & 2                \\
car\_evaluation                  & 1728                 & 20                  & 4                \\
kr-vs-kp     & 3196                 & 38                  & 2                \\ \hline
\end{tabular}
\end{center}
\vskip -0.1in
\end{table}

\paragraph{Experimental Setup.} Each dataset is split into three parts: the training set (50\%), the validation set (25\%), and the testing set (25\%). The training and validation sets are used to tune the value of the hyperparameter $\lambda$. We repeat this process 5 times with 5 different samples. We test values of $\lambda=0.1j$ for $j=0,\ldots,9$. Finally, we use the best $\lambda$ to train a tree using the training and evaluation sets from the previous step, which we then evaluate against the testing set to determine the out-of-sample accuracy.
All approaches are implemented in Python programming language and solved by the Gurobi 8.1 solver.
All problems are solved on a single core of SL250s Xeon CPUs by HPE and 4gb of memory with a one hour time limit.

\begin{figure*}[!bt]
    \centering
    \subfloat[Performance profile with $\lambda=0$.]{
    \includegraphics[width=0.5\columnwidth]{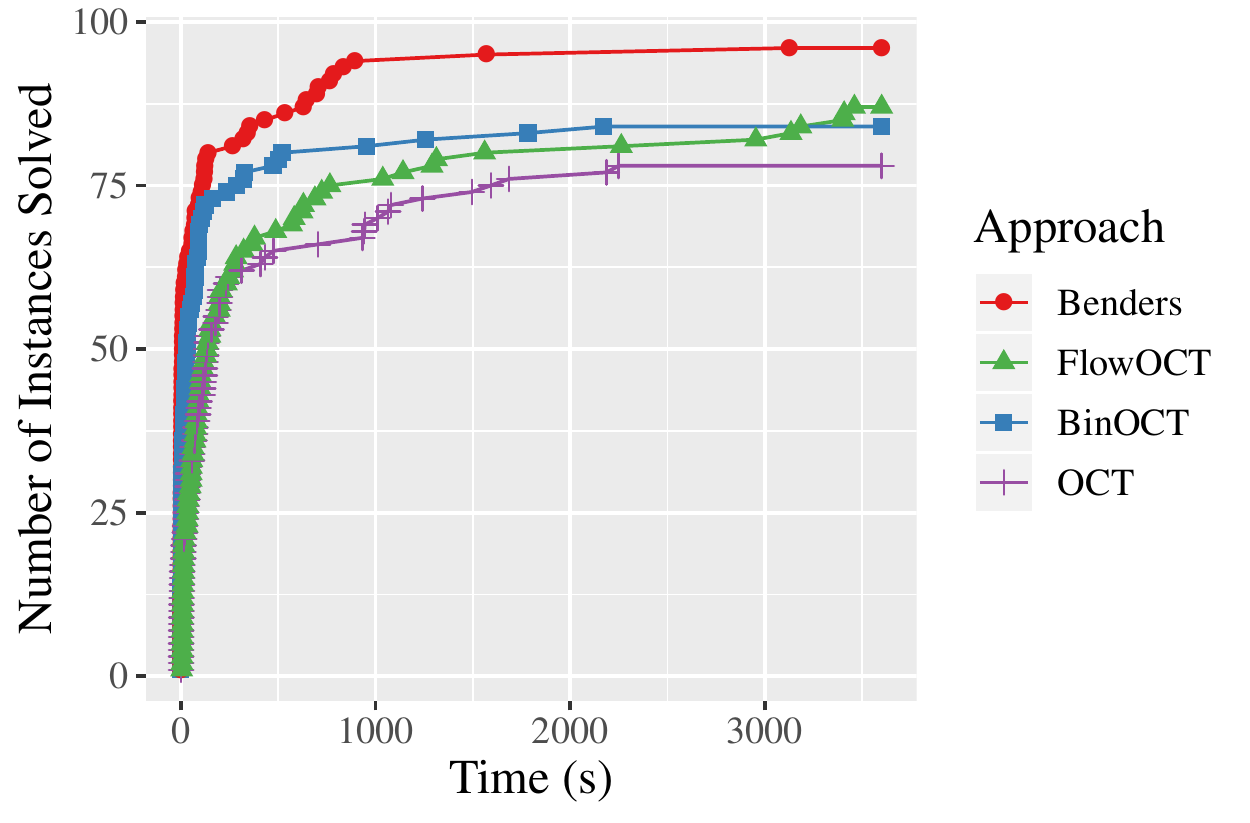}} \hspace{0.25cm}
    \subfloat[Performance profile with $\lambda>0$.]{
    \includegraphics[width=0.5\columnwidth]{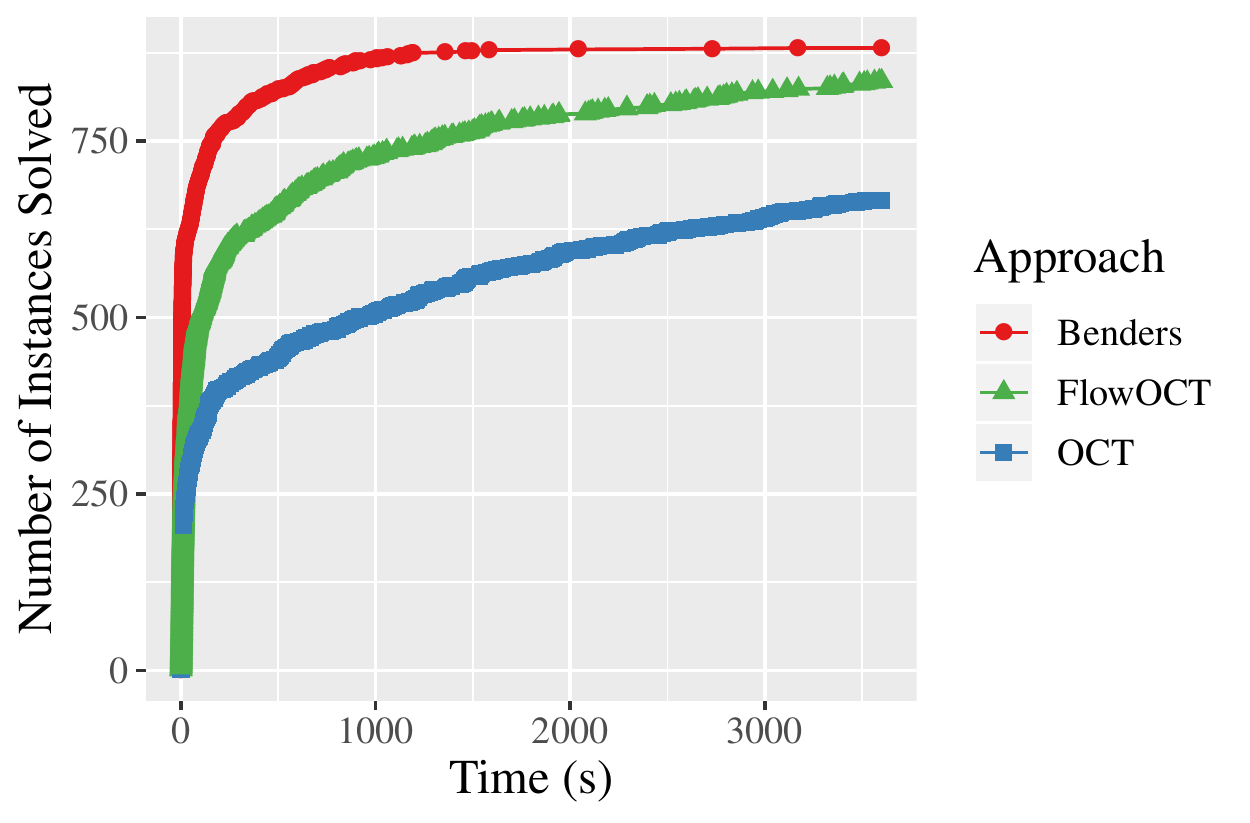}} \hspace{0.25cm}
	\subfloat[Optimality gaps as a function of the size$=2^d\times |\sets I|$.]{
    \includegraphics[width=0.5\columnwidth]{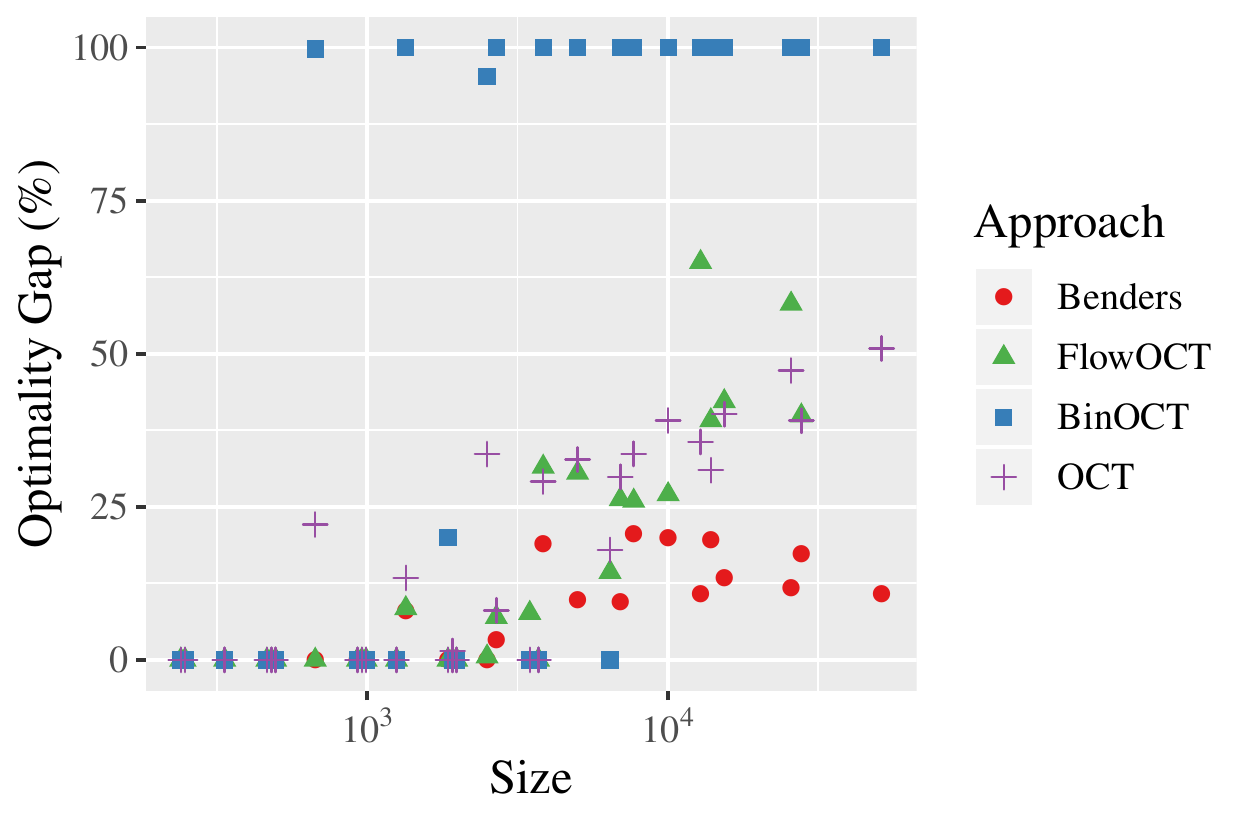}}
    \caption{Summary of optimization performance.}
    \label{fig:performace-gap}
\end{figure*}


\paragraph{In-Sample (Optimization) Performance.} Figure~\ref{fig:performace-gap} summarizes the in-sample performance, i.e., how good the methods are at solving the optimization problems. Detailed results are provided in the online companion B. From Figure~\ref{fig:performace-gap}(a), we observe that for $\lambda=0$, \texttt{BinOCT} is able to solve 79 instances within the time limit (and outperforms \texttt{OCT}), but \texttt{Benders} solves the same quantity of instances in only 140 seconds, \emph{resulting in a $30\times$ speedup}. Similarly, from Figure~\ref{fig:performace-gap}(b), it can be seen that for $\lambda>0$, \texttt{OCT} is able to solve 666 instances within the time limit\footnote{\texttt{BinOCT} does not include the option to have a regularization parameter, and is omitted.}, while \texttt{Benders} requires only 70 seconds to do so, \emph{resulting in a $50\times$ speedup}. Finally, Figure~\ref{fig:performace-gap}(c) shows the optimality gaps proven as a function of the dimension. We observe that all methods result in a gap of $0\%$ in small instances. As the dimension increases, BinOCT (which relies on weak formulations but fast enumeration) yields 100\% optimality gaps in most cases. \texttt{OCT} and \texttt{BinOCT} prove better gaps, but the performance degrades substantially as the dimension increases. \texttt{Benders} results in the best performance, proving optimality gaps of 20\% or less regardless of dimension.

\paragraph{Out-of-Sample (Statistical) Performance.} Table~\ref{tab:out_of_sample} reports the out-of-sample accuracy after cross-validation. Each row represents the average over the five samples. We observe that the better optimization performance translates to superior statistical properties as well: \texttt{OCT} is the best method in two instances (excluding ties), \texttt{BinOCT} in six, while the new formulations \texttt{FlowOCT} and \texttt{Benders} are better in 13 (of which \texttt{Benders} accounts for 10, and is second after \texttt{FlowOCT} in an additional two).

\begin{table}[!ht]
\begin{center}
\caption{Out of sample accuracy}
\label{tab:out_of_sample}
\setlength{\tabcolsep}{2pt}

\begin{tabular}{lc|c|c|c|c}
\hline
Dataset & Depth & OCT & BinOCT & FlowOCT & Benders\\
\hline
monk3           & 2     & \textbf{92.3} & \textbf{92.3} & \textbf{92.3} & \textbf{92.3} \\
monk3           & 3     & 83.2          & \textbf{91}   & \textbf{91}   & \textbf{91}   \\
monk3           & 4     & 91            & 85.2          & \textbf{92.3} & 91            \\
monk3           & 5     & 87.1          & 87.7          & \textbf{92.3} & 91.6          \\
monk1           & 2     & 71            & \textbf{72.3} & \textbf{72.3} & 71            \\
monk1           & 3     & \textbf{83.2} & 82.6          & 81.3          & 81.3          \\
monk1           & 4     & \textbf{100}  & 99.4          & \textbf{100}  & \textbf{100}  \\
monk1           & 5     & 93.5          & 96.8          & \textbf{100}  & \textbf{100}  \\
monk2           & 2     & \textbf{56.7} & 49.8          & \textbf{56.7} & \textbf{56.7} \\
monk2           & 3     & 62.3          & 58.1          & \textbf{63.7} & 63.3          \\
monk2           & 4     & 59.5          & 60.5          & 58.6          & \textbf{64.2} \\
monk2           & 5     & \textbf{63.3} & 55.8          & 62.3          & 61.9          \\
house-votes-84  & 2     & 79.3          & 96.2          & \textbf{97.2} & \textbf{97.2} \\
house-votes-84  & 3     & \textbf{97.2} & 94.1          & \textbf{97.2} & \textbf{97.2} \\
house-votes-84  & 4     & \textbf{96.9} & 94.8          & \textbf{96.9} & 95.5          \\
house-votes-84  & 5     & 95.2          & 93.1          & 96.9          & \textbf{97.2} \\
balance-scale   & 2     & \textbf{68.7} & 67.9          & \textbf{68.7} & \textbf{68.7} \\
balance-scale   & 3     & 69            & \textbf{71.5} & 69.8          & 71            \\
balance-scale   & 4     & 68.5          & \textbf{73.9} & 73.2          & 71.7          \\
balance-scale   & 5     & 65.7          & 75.3          & 71.6          & \textbf{76.8} \\
tic-tac-toe     & 2     & \textbf{66.7} & 65.9          & \textbf{66.7} & \textbf{66.7} \\
tic-tac-toe     & 3     & 68.1          & 72.2          & 68.5          & \textbf{72.6} \\
tic-tac-toe     & 4     & 70.4          & \textbf{80.3} & 68.7          & 77.1          \\
tic-tac-toe     & 5     & 69.7          & 78.9          & 66.3          & \textbf{79.3} \\
car\_evaluation & 2     & \textbf{76.5} & \textbf{76.5} & \textbf{76.5} & \textbf{76.5} \\
car\_evaluation & 3     & 73.3          & 78.4          & 76.7          & \textbf{79.1} \\
car\_evaluation & 4     & 75.2          & \textbf{80.3} & 71.6          & 79.7          \\
car\_evaluation & 5     & 74.8          & \textbf{81.3} & 61.6          & 80.5          \\
kr-vs-kp        & 2     & 73.7          & \textbf{87.2} & 70.5          & \textbf{87.2} \\
kr-vs-kp        & 3     & 69.3          & 87.8          & 61.2          & \textbf{89.9} \\
kr-vs-kp        & 4     & 64.7          & 90.8          & 54.3          & \textbf{91}   \\
kr-vs-kp        & 5     & 62.7          & \textbf{87.1} & 45.8          & 86.7 \\
\hline
\end{tabular}
\end{center}
\end{table}

\phantom{ bla }

\newpage
\phantom{bla}
\newpage

\bibliography{bib}

\newpage

\appendix

\section{Proof of Theorem~\ref{theo:facet}}
\label{appendix_sec:polyhedral}

\newpv{The proof proceeds in three steps. We fix $i \in \sets I$. We derive the specific structure $(p,q) \in \sets P$ of the cuts associated with datapoint~$i$ generated by our procedure. We then provide $|\mathcal N \times \mathcal F| + |\mathcal L \times \mathcal K| + |\mathcal I|$ points that lie in $\textrm{conv}(\sets H_{\leq})$ and at each of which the cut generated holds with equality. Since the choice of $i \in \sets I$ is arbitrary and since the cuts generated by our procedure are valid (by construction), this will conclude the proof.}

\newpv{To minimize notational overhead, we assume throughout this proof that $\lambda = 0$. In this case, an optimal solution to the master problem where $\sum_{k\in \sets K} w_{nk}=0$ for all $n \in \sets N$ can always be obtained. 
} Given a set $A$ and a point $a\in A$, we use $A\setminus a$ as a shorthand for $A\setminus\{a\}$. Finally, \newpv{we} let \newpv{$e_{ij}=1$} be a vector (whose dimensions will be clear from the context) with a $1$ in coordinates $(i,j)$ and $0$ elsewhere.

\newpv{Fix $i \in \sets I$}. Let $(\bar b,\bar w,\bar g)$ be optimal in the (relaxed) master problem and assume $\sum_{k\in \sets K} \bar w_{nk}=0$ for all $n \in \sets N$. \newpv{Given $j\in \sets I$,} let $n(j)\in \sets L$ be the leaf of the tree defined by $(\bar b,\bar w)$ that datapoint~$j$ is assigned to. \newpv{Given $n\in \sets N$, let $f(n)\in \sets F$ be the feature selected for testing at node~$n$ under $(\bar b,\bar w)$, i.e., $\bar b_{nf(n)}=1$.}

\newpv{We now derive the structure of the cuts~\eqref{eq:mip_MP2_b} generated by Algorithm~\ref{alg:cut} (see also the proof of Proposition~\ref{prop:algorithm}) when $(\bar b,\bar w,\bar g)$ is input. A minimum cut is returned by Algorithm~\ref{alg:cut} if and only if~$s$ and~$t$ belong to different connected components in the tree induced by $(\bar b,\bar w)$. Under this assumption, since $\sum_{k\in \sets K} \bar w_{nk}=0$ for all $n \in \sets N$, the connected component $\sets S$ constructed in Algorithm~\ref{alg:cut} forms a path from~$s$ to~$n_d = n(i) \in \sets L$, i.e., $\sets S=\{s,n_1,n_2,\ldots,n_d\}$. The minimum cut $q$ obtained from Algorithm~\ref{alg:cut} then corresponds to the arcs adjacent to nodes in~$\sets S$ that do not belong to the path formed by~$\sets S$. Therefore, $q_{s,1} = 0$, $q_{n,t}=1$ iff $n=n(i)$, and for each $n \in \sets N$,
$$
q_{n,\ell(n)}=1 \;\; \text{  iff  } \;\; n \in p \;\; \text{  and  } \;\; \sum_{f \in \sets F : x_f^i = 1} \bar b_{nf} = 1, \text{ and}
$$
$$
q_{n,r(n)}=1 \;\; \text{  iff  } \;\; n \in p \;\; \text{  and  } \;\; \sum_{f \in \sets F : x_f^i = 0} \bar b_{nf} = 1.
$$
Therefore, the cut~\eqref{eq:mip_MP2_b} returned by Algorithm~\ref{alg:cut} reads
\begin{equation}
    g_i \; \leq \; w_{n(i)y^i} +  \sum_{n \in \sets S} \sum_{ \begin{smallmatrix} f \in \sets F : \\  x_f^i \neq x_{f(n)}^i \end{smallmatrix} } b_{nf}.
    \label{eq:facet}
\end{equation}
}

Next, we give $|\sets N\times \sets F|+|\sets L\times \sets K|+|\sets I|$ affinely independent points in \newpv{$\sets H_{\leq}$} \newpv{for} which~\eqref{eq:facet} holds with equality. Given a vector \newpv{$b\in \{0,1\}^{|\sets N| \times |\sets F|}$}, we \newpv{let $b_{\sets S}$ (resp.\ $b_{\sets N\setminus \sets S}$) collect those elements of~$b$ whose first index} \newpv{$n\in \sets S$ (resp.\ $n \notin \sets S$)}. We now describe the points, which are also summarized in Table~\ref{tab:affine}.

\begin{table*}[h!]
\begin{center}
\small{
	\caption{\newpv{Companion table for the proof of Theorem~\ref{theo:facet}:} list of affinely independent points \newpv{that live on the cut generated by inputing $i \in \sets I$ and $(\bar b,\bar w,\bar g)$ in Algorithm~\ref{alg:cut}.}}
	\label{tab:affine}
	\begin{tabular}{c|l|r c c c c }
		\hline
		\multirow{2}{*}{\#} & \multirow{2}{*}{condition} \hfill dim$=$ & &$|\sets S|\times |\sets F|$ & $|\sets N\setminus \sets S|\times |\sets F|$&$|\sets L|\times |\sets K|$& $|\sets I|$\\
		& \hfill sol$=$ & & $b_{\sets S}$ & $b_{\sets N\setminus \sets S}$ & $w$&$g$\\
		\hline
		&&&&&&\\
		1 & ``baseline'' point  && $\bar b_\sets S$ & 0 & 0 & 0 \\
		&&&&&&\\
		2 & $n\in \sets L,k\in \sets K\setminus y^i$&& $\bar b_{\sets S}$ & 0 &  $e_{nk}$ & 0 \\
		3 & $n\in \sets L\setminus n(i)$&  & $\bar b_{\sets S}$ & 0 &  $e_{ny^i}$  & 0\\
		4 & \newpv{$n=n(i)$} &  & $\bar b_{\sets S}$ & 0 & $e_{n(i) y^i}$ & $e_{i}$ \\
		&&&&&&\\
		5 & $n\in \sets N\setminus \sets S,f\in \sets F$&& $\bar b_{\sets S}$ & $e_{nf}$ & 0 & 0 \\
		&&&&&&\\
		6 & $n\in \sets S$& & $\bar b_{\sets S}-e_{nf(n)}$ & 0 & 0 & 0 \\
		7 & $n\in \sets S,f\in \sets F:f\neq f(n),x_f^i=x_{f(n)}^i$&  & $\bar b_{\sets S}-e_{nf(n)}+e_{nf}$ & 0 & 0 & 0\\
		8 & $n\in \sets S,f\in \sets F:f\neq f(n),x_f^i\neq x_{f(n)}^i$&  & $\bar b_{\sets S}-e_{nf(n)}+e_{nf}$ & 0 & $\displaystyle\sum_{n\in \sets L: n\neq n(i)}e_{ny^i}$ & $e_i$\\
		&&&&&&\\
		9 & $j\in \sets I\setminus i: y^j\neq y^i$ && $\bar b_{\sets S}$ & $\bar b_{\sets N\setminus \sets S}$ & $e_{n(j)y^j}$ & $e_j$ \\
		10 & $j\in \sets I\setminus i: y^j=y^i$, $n(j)\neq n(i)$& & $\bar b_{\sets S}$ & $\bar b_{\sets N\setminus \sets S}$ & $e_{n(j)y^j}$ & $e_j$\\
		11 & $j\in \sets I\setminus i: y^j=y^i$, $n(j)= n(i)$& & $\bar b_{\sets S}$ & $\bar b_{\sets N\setminus \sets S}$ & $e_{n(i)y^i}$ & $e_i+e_j$\\
		\hline
	\end{tabular}
	}
	\end{center}
\end{table*}

\begin{description}
	\item[1] \hspace{0.25cm} \newpv{One point that is a ``baseline'' point; all other points are variants of it. It is given by} \newpv{$b_{\sets S}=\bar b_{\sets S}$}, \newpv{$b_{\sets N\setminus \sets S} =0$}, $w=0$ and $g=0$ \newpv{and corresponds to} selecting the features to \newpv{branch on} according to~$\bar b$ for nodes \newpv{in $\sets S$} and setting all remaining variables to $0$\newpv{. The baseline point belongs to $\sets H_\leq$} and constraint~\eqref{eq:facet} \newpv{is active at this point}.
	\item[2-4] $|\sets L|\times |\sets K|$ points \newpv{obtained from the baseline point by varying the $w$ coordinates and adjusting $g$ as necessary to ensure~\eqref{eq:facet} remains active:} \newpv{\textbf{2:}~$|\sets L|\times (|\sets K|-1)$ points, each associated with a leaf $n \in \sets L$ and class $k \in \sets K : k\neq y^i$, where the label of leaf $n$ is changed to $k$}. \textbf{3:}~\newpv{$|\sets L|-1$ points, each associated with a leaf $n \in \sets L : n \neq n(i)$, where the class label of~$n$ is changed to~$y^i$}. \textbf{4:}~\newpv{One point where the} class label of leaf \newpv{$n(i)$} is set to~$y^i$, allowing for correct classification of datapoint~$i$; in this case, the value of the rhs of \eqref{eq:facet} is 1, and we set $g^i=1$ to ensure \newpv{the cut~\eqref{eq:facet} remains active.}
	\item[5] \hspace{0.25cm} \newpv{$|\sets N\setminus \sets S|\times |\sets F|$ points obtained from the baseline point by varying the~$b_{\sets N \backslash \sets S}$ coordinates. Each point is associated with a node $n\in \sets N \backslash \sets S$ and feature $f \in \sets F$ and is obtained by changing the decision to branch on feature $f$ and node $n$ to 1}. As those branching decisions do not impact \newpv{the routing of} datapoint~$i$ \newpv{the value of the rhs of inequality~\eqref{eq:facet} remains unchanged and the inequality stays active}.
	\item[6-8] \newpv{$| \sets S |\times |\sets F|$ points, obtained from the baseline point by varying the $b_\sets S$ coordinates and adjusting~$w$ and~$g$ as necessary to guarantee feasibility of the resulting point and to ensure that~\eqref{eq:facet} stays active.}  \textbf{6:} \newpv{$| \sets S |$ points, each associated with a node $n \in \sets S$ obtained by not branching on feature $f(n)$ at node $n$ (nor on any other feature)}, resulting in a ``dead-end'' node. \newpv{The value of the rhs of~\eqref{eq:facet} is unchanged in this case and the inequality remains active.} \textbf{7-8:} \newpv{$| \sets S |$ points, each associated with a node $n \in \sets S$ and feature $f \neq f(n)$. }
	\textbf{7:}~If the branching decision $f(n)$ at node $n$ is replaced with a branching decision that results in the same path for datapoint~$i$\newpv{, i.e., if $x_f^i=x_{f(n)}^i$}, it is possible to swap those decisions without affecting the \newpv{value} of the rhs in inequality~\eqref{eq:facet}. \textbf{8:}~If a feature that causes~$i$ to change paths is chosen for branching\newpv{, i.e., if $x_f^i \neq x_{f(n)}^i$}, then the \newpv{value of the} rhs of~\eqref{eq:facet} is increased by 1, and we set $g^i=1$ to ensure the inequality \newpv{remains active}; to guarantee feasibility of the resulting point, \newpv{we} label each leaf node except for \newpv{$n(i)$} with the class $y^i$, which does not affect inequality~\eqref{eq:facet}. 
	\item[9-11] \newpv{$|I|-1$ points, obtained from the baseline point by letting~$b_{\sets N \backslash \sets S}=\bar b_{\sets N \backslash \sets S}$ and adjusting~$w$ and~$g$ as necessary.} Each point is associated with a datapoint~$j\in \sets I\setminus i$ \newpv{which we allow to be correctly} classified. \textbf{9:}~\newpv{If datapoint~$j$ has a different class than datapoint~$i$ ($y^j \neq y^i$), we label the leaf node where~$j$ is routed to with the class of~$j$}, i.e., $w_{n(j)y^j}=1$. \newpv{The value of the rhs of~\eqref{eq:facet} is unaffected the inequality remains active}.
	\textbf{10:}~\newpv{If datapoint~$j$ has the same class as datapoint~$i$} but \newpv{is} routed to \newpv{a} different leaf \newpv{than $i$, an argument paralleling that in ~\textbf{9} can be made}. \textbf{11:}~\newpv{If datapoint~$j$ has the same class as datapoint~$i$} and \newpv{is} routed to the same leaf \newpv{$n(i)$}, we label \newpv{$n(i)$} with the class of $y^i=y^j$ and set $g^j=1$; the \newpv{value of the rhs} of~\eqref{eq:facet} increases by~$1$. \newpv{Thus,} we set also correctly classify datapoint $i$ by setting $g^i=1$ to ensure that \eqref{eq:facet} \newpv{is active}.
\end{description}
The $|\sets N\times \sets F|+|\sets L\times \sets K|+|\sets I|$ \newpv{points constructed above, see also Table~\ref{tab:affine}, are} affinely independent. \newpv{Indeed, each differs from the previously introduced points in at least one coordinate. All these points also belong to $\sets H_{\leq}$. This concludes the proof.}

\section{OCT}
\label{appendix_sec:OCT}
In this section, we provide a simplified version of the formulation of \citet{bertsimas2017optimal} specialized to the case of binary data.

\begin{figure}[ht]
\vskip 0.2in
\begin{center}
\centerline{\includegraphics[width=0.5\columnwidth]{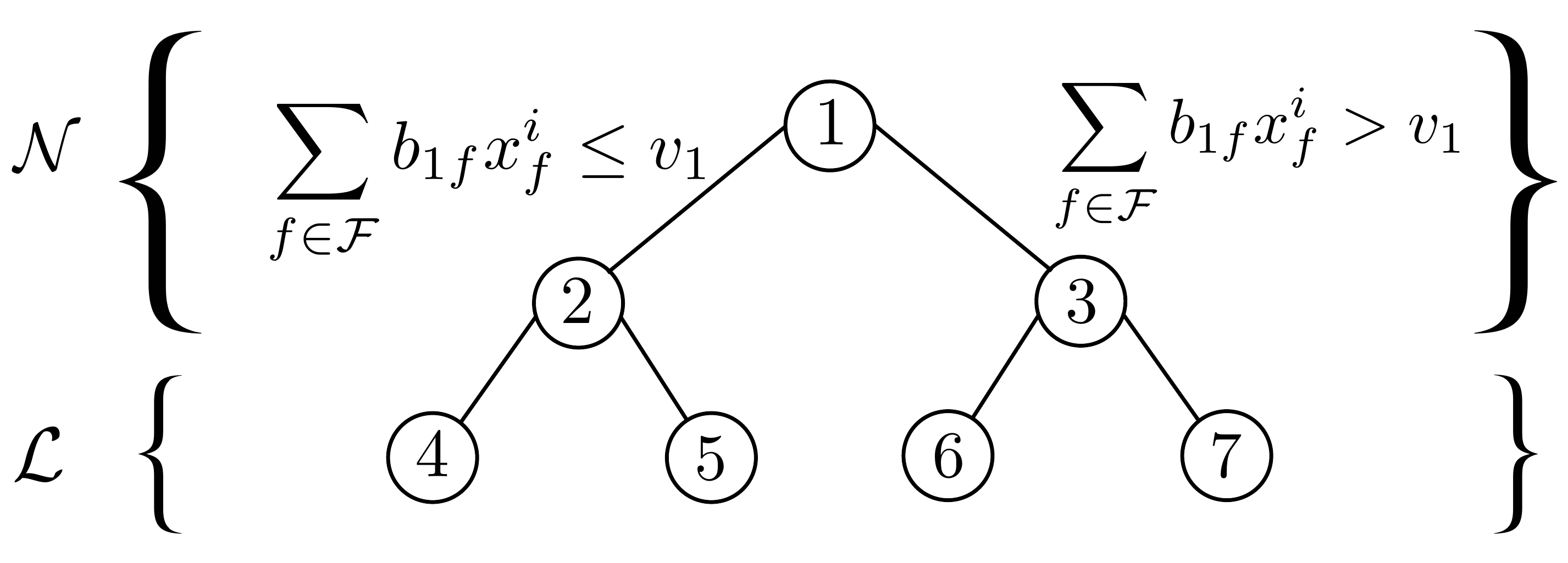}}
\caption{A classification tree of depth 2}
\label{fig:sample_tree_B}
\end{center}
\vskip -0.2in
\end{figure}

We start with introducing the notation that is used for the formulation. Let $\sets N$ and $\sets L$ denote the sets of all internal and leaf nodes in the tree structure. For each node $n \in \sets N \cup \sets L \backslash \{1\}$, $a(n)$ refers to the direct ancestor of node $n$. $\sets {AL}(n)$ is the set of ancestors of $n$ whose left branch has been followed on the path from the root node to $n$, and similarly $\sets {AR}(n)$ is the set of right-branch ancestors, such that $\sets A(n) = \sets{AL}(n) \cup \sets{AR}(n)$

Let $b_{nf}$ be a binary decision variable where $b_{nf}=1$ iff at node n, feature f is branched upon.
For each datapoint $i$ at node $n \in \sets N$ a test $ \sum_{f \in \sets F}b_{nf}x^i_f < v_n$ is performed where $v_n \in \mathbb{R}$ is a decision variable represeinting the cut-off value of the test. If datapoint $i$ passes the test it follows the left branch otherwise it follows the right one. Let $d_n=1$ iff node $n$ applies a split, to allow having this option not to split at a node.
To track each datapoint $i$ through the tree, the decision variable $ \zeta_{a(n),n}^i$ is introduced where $ \zeta_{a(n),n}^i = 1$ iff datapoint $i$ is in node $n$.

Let $P_{nk}$ to be the number of datapoints of class $k$ assigned to leaf node $n$ and $P_n$ to be the total number of datapoints in leaf node $n$.
Let $w_{nk}$ denote the prediction of each leaf node $n$, where $w_{nk}=1$ iff the predicted label of node $n$ is $k \in \sets K$.
At the end, let $L_n$ denote the number of missclassified datapoints at node $n$.

\begin{subequations}
\begin{align}
\max \;\;&  \displaystyle (1-\lambda)\left(|\sets I|-\sum_{n \in \sets L}L_n\right) - \lambda \sum_{n \in \sets N}d_n \label{eq:OCT_a}\\
\text{s.t.} \; \; & \displaystyle L_n \geq P_n - P_{nk} - |\sets I|(1-w_{nk}) & \vspace{-5cm} \forall k \in \sets K, n \in \sets L \label{eq:OCT_b}\\
& \displaystyle L_n \leq P_n - P_{nk} + |\sets I|w_{nk} & \vspace{-5cm}  \forall k \in \sets K, n \in \sets L \label{eq:OCT_c}\\
&  \displaystyle P_{nk}= \sum_{\begin{smallmatrix} i\in \sets I:\\ y^i=k \end{smallmatrix}} \zeta_{a(n),n}^i & \vspace{-5cm} \forall k \in \sets K, n \in \sets L \label{eq:OCT_d}\\
&  P_n=\sum_{i \in \sets I} \zeta_{a(n),n}^i & \vspace{-5cm}  \forall n \in \sets L \label{eq:OCT_e}\\
&  l_n= \sum_{k \in \sets K}w_{nk} & \vspace{-5cm} \forall n \in \sets L \label{eq:OCT_f}\\
&   \zeta_{a(n),n}^i \leq l_n & \vspace{-5cm} \forall n \in \sets L \label{eq:OCT_g}\\
&  \sum_{n \in \sets L} \zeta_{a(n),n}^i = 1 & \vspace{-5cm} \forall i \in \sets I \label{eq:OCT_h}\\
&    \sum_{f \in \sets F}b_{mf}x^i_f \geq v_m + \zeta_{a(n),n}^i -1 & \vspace{-5cm} \forall i \in \sets I, n \in \sets L, m \in \sets {AR}(n)  \label{eq:OCT_i}\\
&   \sum_{f \in \sets F}b_{mf}x^i_f \leq v_m - 2 \zeta_{a(n),n}^i +1 & \vspace{-5cm} \forall i \in \sets I, n \in \sets L, m \in \sets {AL}(n)  \label{eq:OCT_j}\\
&  \sum_{f \in \sets F}b_{nf} = d_n & \vspace{-5cm} \forall n \in \sets N \label{eq:OCT_k}\\
&   0 \leq v_n \leq d_n & \vspace{-5cm} \forall n \in \sets N \label{eq:OCT_l}\\
&   d_n\leq d_{a(n)} & \vspace{-5cm}  \forall n \in \sets N\backslash\{1\} \label{eq:OCT_m}\\
&   z^i_n, l_n \in \{0,1\}  & \vspace{-5cm} \forall i \in \sets I, n \in \sets L \label{eq:OCT_n}\\
&   b_{nf}, d_n \in \{0,1\} & \vspace{-5cm} \forall f \in \sets F, n \in \sets N \label{eq:OCT_o},
\end{align}
\label{eq:OCT}
\end{subequations}

where $\lambda \in [0,1]$ is a regularization term. The objective~\eqref{eq:OCT_a} maximizes the total number of correctly classified datapoints $|\sets I|-\sum_{n \in \sets L}L_n$ while minimizing the number of splits $\sum_{n \in \sets N}d_n $.
Constraints~\eqref{eq:OCT_b} and~\eqref{eq:OCT_c} defines the number of missclassified datapoints at each node $n$. Constraints~\eqref{eq:OCT_d} and~\eqref{eq:OCT_e} give the definitions of $P_{nk}$ and $P_n$ respectively.
constraints~\eqref{eq:OCT_f}-~\eqref{eq:OCT_g}, enforce that if a leaf $n$ does not have an assigned class label, no datapoint should end up at that leaf. Constraint~\eqref{eq:OCT_h} makes sure that each datapoint $i$ is assigned to exactly one of the leaf nodes. 
Constraint~\eqref{eq:OCT_i} implies that if datapoint $i$ is assigned to node $n$, it should take the right branch for all ancestors of $n$ belonging to $\sets {AR}(n)$.  Respectively, constraint~\eqref{eq:OCT_j} implies that if datapoint $i$ is assigned to node $n$, it should take the left branch for all ancestors of $n$ belonging to $\sets {AL}(n)$.
Constraint~\eqref{eq:OCT_k} enforces that if node $n$ splits, it should split on exatcly one of the features $f \in \sets F$. 
Constraint~\eqref{eq:OCT_l} implies that if a node does not apply a split, all datapoints going through this node would take the right branch. At the end constraint~\eqref{eq:OCT_m} makes sure that if node $n$ does not split, none of its descendants cannot split.

In the main formulation of~\citet{bertsimas2017optimal}, they have parameter $N_{\text{min}}$ which denotes the minimum number of points at each leaf. We set this parameter to zero as we do not have a similar notion in our formulation. 

\section{Comparison with OCT}
\label{appendix_sec:oct_comparison}
In formulation~\eqref{eq:OCT}, $v_n$ can be fixed to $d_n$ for all nodes (for the case of binary data) leading to the simplified formulation

\begin{align*}
\max \;\;&  \displaystyle (1-\lambda)\left(|\sets I|-\sum_{n \in \sets L}L_n\right) - \lambda \sum_{n \in \sets N}d_n \\
\text{s.t.}\;\;& \displaystyle L_n \geq P_n - P_{nk} - |\sets I|(1-w_{nk}) & \vspace{-5cm}  \forall k \in \sets K, n \in \sets L \\
& \displaystyle L_n \leq P_n - P_{nk} + |\sets I|w_{nk} & \vspace{-5cm} \forall k \in \sets K, n \in \sets L \\
&  \displaystyle P_{nk}= \sum_{\begin{smallmatrix} i\in \sets I:\\ y^i=k \end{smallmatrix}} \zeta_{a(n),n}^i & \vspace{-5cm} \forall k \in \sets K, n \in \sets L \\
&  P_n=\sum_{i \in \sets I} \zeta_{a(n),n}^i & \vspace{-5cm}  \forall n \in \sets L \\
&  l_n= \sum_{k \in \sets K}w_{nk} & \vspace{-5cm}  \forall n \in \sets L \\
&   \zeta_{a(n),n}^i \leq l_n & \vspace{-5cm} \forall n \in \sets L \\
&  \sum_{n \in \sets L} \zeta_{a(n),n}^i = 1 & \vspace{-5cm}  \forall i \in \sets I \\
&    \sum_{f \in \sets F:x_f^i=1}b_{mf} \geq d_m + \zeta_{a(n),n}^i -1 & \vspace{-5cm}   \forall i \in \sets I, n \in \sets L, m \in \sets {AR}(n)  \\
&   \sum_{f \in \sets F:x_f^i=1}b_{mf} \leq d_m - 2 \zeta_{a(n),n}^i +1 & \vspace{-5cm} \forall i \in \sets I, n \in \sets L, m \in \sets {AL}(n)  \\
&  \sum_{f \in \sets F}b_{nf} = d_n & \vspace{-5cm} \forall n \in \sets N \\
&   d_n\leq d_{a(n)} & \vspace{-5cm} \forall n \in \sets N\backslash\{1\} \\
&   z^i_n, l_n \in \{0,1\} & \vspace{-5cm} \forall i \in \sets I, n \in \sets L \\
&   b_{nf}, d_n \in \{0,1\} & \vspace{-5cm} \forall f \in \sets F, n \in \sets N ,
\end{align*}

Note that fixing $v_n$ is a simplification due to the assumption of binary data, rather than an actual strengthening. Moreover, note that OCT and FlowOCT have different conventions for nodes where branching is not performed: in FlowOCT, a label (encoded by $w_{nk}$) is directly assigned to that node, while in OCT all points go right by convention. This different convention creates a slight change in the feasible region of both formulations. To be able to directly compare the formulations, we consider the case of ``full" trees where branching is performed at all internal nodes $\sets N$. For FlowOCT formulation, this corresponds to setting $w_{nk}=0$ for all $n\in \sets N$, $k\in \sets K$, while for OCT it corresponds to setting $d_n=1$ for all $n\in \sets S$. Moreover, using the identity $\sum_{f \in \sets F:x_f^i=1}b_{mf}=1-\sum_{f \in \sets F:x_f^i=0}b_{mf}$ and and noting that $l_n$ can be fixed to $1$ in the formulation, we obtain the simplified OCT formulation

\begin{subequations}\label{eq:OCT2}
\begin{align}
\max \;\;&  \displaystyle \left(|\sets I|-\sum_{n \in \sets L}L_n\right) \\
\text{s.t. }\;\;&  \displaystyle L_n \geq P_n - P_{nk} - |\sets I|(1-w_{nk}) \quad&\hspace{-2cm}  \forall k \in \sets K, n \in \sets L \hfill\label{eq:OCT2_first}\\
& \displaystyle L_n \leq P_n - P_{nk} + |\sets I|w_{nk} \quad &\hspace{-2cm} \forall k \in \sets K, n \in \sets L \hfill \label{eq:OCT2_ub}\\
&  \displaystyle P_{nk}= \sum_{\begin{smallmatrix} i\in \sets I:\\ y^i=k \end{smallmatrix}} \zeta_{a(n),n}^i \quad &\hspace{-2cm}\forall k \in \sets K, n \in \sets L \hfill\label{eq:OCT2_P1}\\
&  P_n=\sum_{i \in \sets I} \zeta_{a(n),n}^i \quad &\hspace{-2cm}\forall n \in \sets L \hfill\label{eq:OCT2_P2}\\
&   \sum_{k \in \sets K}w_{nk}=1 \quad &\hspace{-2cm}\forall n \in \sets L \hfill\label{eq:OCT2_label}\\
&  \sum_{n \in \sets L} \zeta_{a(n),n}^i = 1 \quad  &\hspace{-2cm} \forall i \in \sets I \hfill\label{eq:OCT2_assign}\\
&    \sum_{f \in \sets F:x_f^i=1}b_{mf} \geq  \zeta_{a(n),n}^i &\hspace{-2cm}\forall i \in \sets I, n \in \sets L, m \in \sets {AR}(n) \label{eq:OCT2_right} \hfill\\
&  \sum_{f \in \sets F:x_f^i=0}b_{mf} \geq  2\zeta_{a(n),n}^i-1 &\hspace{-2cm}\forall i \in \sets I, n \in \sets L, m \in \sets {AL}(n)  \hfill\label{eq:OCT2_bound}\\
&  \sum_{f \in \sets F}b_{nf} = 1  \quad  &\hspace{-2cm}\forall n \in \sets N \hfill\label{eq:OCT2_branch}\\
&   \zeta_{a(n),n}^i \in \{0,1\}  \quad&\hspace{-2cm} \forall i \in \sets I, n \in \sets L \hfill\\
&   b_{nf}\in \{0,1\} \quad & \hspace{-10cm}\forall f \in \sets F, n \in \sets N.\label{eq:OCT2_last}
\end{align}
\end{subequations}

\subsection{Strengthening}\label{sec:strength}

We now show how formulation \eqref{eq:OCT2} can be strengthened. Observe that the validity of the steps below is guaranteed by the validity of FlowOCT, thus we do not focus on validity below. 

\paragraph{Bound tightening for \eqref{eq:OCT2_bound}} Adding the quantity $1-\zeta_{a(n),n}^i\geq 0$ to the right hand side of \eqref{eq:OCT2_bound}, we obtain the stronger constraints
\begin{equation}\label{eq:OCT2_left}
    \sum_{f \in \sets F:x_f^i=0}b_{mf} \geq  \zeta_{a(n),n}^i \; \forall i \in \sets I, n \in \sets L, m \in \sets {AL}(n).
\end{equation}

\paragraph{Improved branching constraints} 
Constraints \eqref{eq:OCT2_right} can be strengthened to 
\begin{equation}\label{eq:OCT2_right_new}
     \sum_{f \in \sets F:x_f^i=1}b_{mf} \geq  \sum_{n\in \sets L: m\in \sets{ AR}(n)}\zeta_{a(n),n}^i \; \forall i \in \sets I, m \in \sets N.
\end{equation}
Observe that constraints \eqref{eq:OCT2_right_new}, in addition to being stronger than \eqref{eq:OCT2_right}, also reduce the number of constraints require to represent the LP relaxation. Similarly, constraint \eqref{eq:OCT2_left} can be further improved to 
\begin{equation}\label{eq:OCT2_left_new}
     \sum_{f \in \sets F:x_f^i=0}b_{mf} \geq  \sum_{n\in \sets L: m\in \sets{ AL}(n)}\zeta_{a(n),n}^i \; \forall i \in \sets I, m \in \sets N.
\end{equation}

\paragraph{Improved missclassification formulation} 

Define for all $i\in \sets I$ and $n\in \sets L$ additional variables $z_{a(n),n}^i\leq \zeta_{a(n),n}w_{n,y^i}$. Note that $z_{a(n),n}^i=1$ implies that datapoint $i$ is routed to leaf $n$ ($\zeta_{a(n),n}^i=1$) and the class of $i$ is assigned to $n$ ($w_{ny^i}=1$), hence $z_{a(n),n}^i=1$ only if $i$ is correctly classified at leaf $n$. Upper bounds of $z_{a(n),n}^i=1$ can be imposed via the linear constraints 
\begin{equation}\label{eq:linearization}
  z_{a(n),n}^i\leq  \zeta_{a(n),n}^i,\; z_{a(n),n}^i\leq w_{ny^i}\quad \forall n\in\sets L, i\in \sets I. 
\end{equation}
In addition, since $L_n$ corresponds to the number of missclassified points at leaf $n\in \sets L$ and $\sum_{n\in \sets L}L_n$, we find that constraints 
\begin{equation}\label{eq:missclass}
    L_n\geq \sum_{i\in \sets I}(\zeta_{n,a(n)}^i -z_{n,a(n)}^i).
\end{equation}
Note that constraints \eqref{eq:missclass} and \eqref{eq:OCT2_assign} imply that 
\begin{equation}\label{eq:totalMissclass}
    \sum_{n\in \sets L}L_n\geq |\sets I| -\sum_{i\in \sets I}\sum_{n\in \sets L}z_{n,a(n)}^i.
\end{equation}

\subsection{Simplification}\label{sec:simple}
The linear programming relaxation of the formulation obtained in \S\ref{sec:strength}, given by constraints \eqref{eq:OCT2_first}-\eqref{eq:OCT2_assign}, \eqref{eq:OCT2_branch}-\eqref{eq:OCT2_last}, \eqref{eq:OCT2_right_new}, \eqref{eq:OCT2_left_new}, \eqref{eq:linearization} and \eqref{eq:missclass}, is certainly stronger than the relaxation of OCT, as either constraints where tightened or additional constraints were added. We now show how the resulting formulation can be simplified without loss of relaxation quality, ultimately obtaining FlowOCT.  

\paragraph{Upper bound on missclassification} Variable $L_n$ has a negative objective coefficient and only appears on constraints \eqref{eq:OCT2_first}-\eqref{eq:OCT2_ub} and \eqref{eq:missclass}, it will always be set to a lower bound. Therefore, constraint \eqref{eq:OCT2_ub} is redundant and can be dropped without affecting the relaxation of the problem.

\paragraph{Lower bound on missclassification} Substituting variables according to \eqref{eq:OCT2_P1} and \eqref{eq:OCT2_P2}, we find that for a given $k\in \sets K$ and $n\in \sets L$, \eqref{eq:OCT2_first} is equivalent to 
\begin{align*}
    &L_n \geq \sum_{i \in \sets I} \zeta_{a(n),n}^i  - \sum_{\begin{smallmatrix} i\in \sets I:\\ y^i=k \end{smallmatrix}} \zeta_{a(n),n}^i - |\sets I|(1-w_{nk})\\
    \Leftrightarrow\;&L_n \geq \sum_{\substack{i \in \sets I\\y_i=k}}(w_{nk}-1)  +\sum_{\substack{i \in \sets I\\y^i\neq k}}(\zeta_{a(n),n}^i-1+w_{nk}).
\end{align*}
Observe that $w_{nk}-1\leq 0 \leq \zeta_{a(n),n}^i-z_{a(n),n}^i$. Moreover, we also have that for any $i\in \sets I$ and $k\in \sets K\setminus\{y^i\}$, \begin{equation}\label{eq:ineqs}z_{a(n),n}^i\leq w_{ny^i}\leq 1-w_{nk},\end{equation} where the first inequality follows from \eqref{eq:linearization} and the second inequality follows from \eqref{eq:OCT2_label}. Therefore from \eqref{eq:ineqs} we conclude that $\zeta_{a(n),n}^i-1+w_{nk}\leq\zeta_{a(n),n}^i-z_{a(n),n}^i$ and inequalities \eqref{eq:missclass} dominate inequalities \eqref{eq:OCT2_first}. Since inequalities \eqref{eq:OCT2_P1}-\eqref{eq:OCT2_P2} only appeared in inequalities \eqref{eq:OCT2_first}-\eqref{eq:OCT2_ub}, which where shown to be redundant, they can be dropped as well. Finally, as inequalities \eqref{eq:missclass} define the unique the lower bounds of $L_n$ in the simplified formulation, they can be changed to equalities without loss of generalities, and the objective can be updated according to \eqref{eq:totalMissclass}. After all the changes outlined so far, the formulation reduces to 
\begin{subequations}\label{eq:OCT2.0}
\begin{align}
\max \;\;&  \displaystyle \sum_{i\in \sets I}\sum_{n\in \sets L}z_{n,a(n)}^i\\
\text{s.t. }\;\;& \sum_{k \in \sets K}w_{nk}=1 \quad &\hspace{-2cm}\forall n \in \sets L \hfill\label{eq:OCT2.0_label}\\
&  \sum_{n \in \sets L} \zeta_{a(n),n}^i = 1 \quad  &\hspace{-2cm} \forall i \in \sets I \hfill\label{eq:OCT2.0_assign}\\
&\sum_{f \in \sets F:x_f^i=1}b_{mf} \geq  \sum_{n\in \sets L: m\in \sets{ AR}(n)}\zeta_{a(n),n}^i &\hspace{-2cm}\forall i \in \sets I, m \in \sets N\label{eq:OCT2.0_right}\\
&\sum_{f \in \sets F:x_f^i=0}b_{mf} \geq  \sum_{n\in \sets L: m\in \sets{ AL}(n)}\zeta_{a(n),n}^i &\hspace{-2cm}\forall i \in \sets I, m \in \sets N\label{eq:OCT2.0_left}\\
&  \sum_{f \in \sets F}b_{nf} = 1  \quad  &\hspace{-2cm}\forall n \in \sets N \hfill\label{eq:OCT2.0_branch}\\
& z_{a(n),n}^i\leq  \zeta_{a(n),n}^i\quad &\hspace{-2cm}\forall n\in\sets L, i\in \sets I\\
&z_{a(n),n}^i\leq w_{ny^i}\quad &\hspace{-2cm}\forall n\in\sets L, i\in \sets I\\
&   \zeta_{a(n),n}^i \in \{0,1\}  \quad&\hspace{-2cm} \forall i \in \sets I, n \in \sets L \hfill\\
&   b_{nf}\in \{0,1\} \quad & \hspace{-10cm}\forall f \in \sets F, n \in \sets N.\label{eq:OCT2.0_last}
\end{align}
\end{subequations}

\subsection{Projection}
We now project out the $\zeta$ variables, obtaining a more compact formulation with the same LP relaxation. Specifically, consider the formulation 
\begin{subequations}\label{eq:Projection}
\begin{align}
\max \;\;& \displaystyle \sum_{i\in \sets I}\sum_{n\in \sets L}z_{n,a(n)}^i\\
\text{s.t. }\;\;& \sum_{k \in \sets K}w_{nk}=1 \quad &\hspace{-2cm}\forall n \in \sets L \hfill\\
&\sum_{f \in \sets F:x_f^i=1}b_{mf} \geq  \sum_{n\in \sets L: m\in \sets{ AR}(n)}z_{a(n),n}^i &\hspace{-2cm}\forall i \in \sets I, m \in \sets N\label{eq:Projection_right}\\
&\sum_{f \in \sets F:x_f^i=0}b_{mf} \geq  \sum_{n\in \sets L: m\in \sets{ AL}(n)}z_{a(n),n}^i &\hspace{-2cm}\forall i \in \sets I, m \in \sets N\label{eq:Projection_left}\\
&  \sum_{f \in \sets F}b_{nf} = 1  \quad  &\hspace{-2cm}\forall n \in \sets N \hfill\\
&z_{a(n),n}^i\leq w_{ny^i}\quad &\hspace{-2cm}\forall n\in\sets L, i\in \sets I\\
&   z_{a(n),n}^i \in \{0,1\}  \quad&\hspace{-2cm} \forall i \in \sets I, n \in \sets L \hfill\\
&   b_{nf}\in \{0,1\} \quad & \hspace{-10cm}\forall f \in \sets F, n \in \sets N..
\end{align}
\end{subequations}

\begin{proposition}
Formulations \eqref{eq:OCT2.0} and \eqref{eq:Projection} are equivalent, i.e., their LP relaxations have the same optimal objective value.
\end{proposition}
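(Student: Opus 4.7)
My approach is to show that the LP relaxations of \eqref{eq:OCT2.0} and \eqref{eq:Projection} have feasible regions that coincide once projected onto the common $(z,w,b)$ coordinates; since both formulations share the same objective $\sum_{i\in \sets I}\sum_{n\in \sets L} z_{a(n),n}^i$, this immediately yields equality of their LP optima. The ``easy'' inclusion---that every LP-feasible point of \eqref{eq:OCT2.0} induces a feasible $(z,w,b)$ for the LP relaxation of \eqref{eq:Projection}---follows by observing that the inequalities $z_{a(n),n}^i \leq \zeta_{a(n),n}^i$ immediately weaken \eqref{eq:OCT2.0_right}--\eqref{eq:OCT2.0_left} into \eqref{eq:Projection_right}--\eqref{eq:Projection_left}, while every remaining constraint appears identically (up to continuous relaxation of integrality) in both formulations.

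For the reverse inclusion I would fix a datapoint $i\in \sets I$ and construct the corresponding $\zeta^i$ from $(z^i,w,b)$ by recursively distributing the ``missing mass'' down the tree. Write $p_m := \sum_{f\in \sets F:\, x_f^i=0} b_{mf}$ and $q_m := \sum_{f\in \sets F:\, x_f^i=1} b_{mf}$, so that $p_m+q_m=1$ for all $m\in\sets N$ by \eqref{eq:OCT2.0_branch}. Define slacks $s_L^m := p_m - \sum_{n:\, m\in \sets{AL}(n)} z_{a(n),n}^i$ and $s_R^m := q_m - \sum_{n:\, m\in \sets{AR}(n)} z_{a(n),n}^i$, which are nonnegative by feasibility of $(z,w,b)$ in \eqref{eq:Projection}, and note the key identity $s_L^m + s_R^m = 1 - T(m)$, where $T(m):=\sum_{n\in \sets L:\, m\in \sets A(n)} z_{a(n),n}^i$ is the total $z$-mass on leaves below~$m$. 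I would then route a flow $F_m$ down the tree starting from $F_1 := 1-T(1)$, splitting $F_{\ell(m)} := \min(F_m,s_L^m)$ and $F_{r(m)} := F_m - F_{\ell(m)}$ at each internal node, and finally setting $\zeta_{a(n),n}^i := z_{a(n),n}^i + F_n$ at each leaf $n$. By construction, $\zeta\geq z$ (since $F_n\geq 0$), $\sum_n \zeta_{a(n),n}^i = 1$ (the flow telescopes down to the leaves), and the branching bounds \eqref{eq:OCT2.0_right}--\eqref{eq:OCT2.0_left} follow from $F_{\ell(m)}\leq s_L^m$ and $F_{r(m)}\leq s_R^m$ combined with $\zeta = z + F$.

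The main technical obstacle is verifying the invariant $F_m \leq s_L^m + s_R^m = 1 - T(m)$ at every internal node along the recursion, which is what guarantees that the split into $F_{\ell(m)}$ and $F_{r(m)}$ is well-defined and simultaneously respects both children's capacities. A short induction on tree depth handles this: the base case at the root holds with equality, and at a child $\ell(m_p)$ one has $F_{\ell(m_p)}\leq s_L^{m_p} = p_{m_p} - T(\ell(m_p)) \leq 1 - T(\ell(m_p))$ (and symmetrically for the right child), the final inequality reducing to the trivial bound $p_{m_p}\leq 1$ coming from \eqref{eq:OCT2.0_branch} together with $b\geq 0$. That $T(m)\leq 1$ in the first place (so that $F_m\geq 0$ throughout) is itself an immediate consequence of adding the two branching inequalities at $m$ from \eqref{eq:Projection_right}--\eqref{eq:Projection_left} and using $\sum_f b_{mf}=1$. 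Once this inductive invariant is in place, every constraint of the LP relaxation of \eqref{eq:OCT2.0} is verified at $(\zeta,z,w,b)$, and the proposition follows.
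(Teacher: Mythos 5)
Your proof is correct, and its overall skeleton matches the paper's: the inclusion of the relaxation one way is immediate, and the substance lies in lifting a feasible $(z,w,b)$ of \eqref{eq:Projection} to a feasible $(\zeta,z,w,b)$ of \eqref{eq:OCT2.0} with $\zeta\geq z$ and $\sum_{n\in\sets L}\zeta^i_{a(n),n}=1$. Where you differ is in how that lifting is built. The paper starts from $\zeta=z^*$ and repeatedly finds a root-to-leaf path along which every branching inequality in the direction travelled is strict, augments the leaf's $\zeta$ by the minimum slack, and iterates until \eqref{eq:OCT2.0_assign} is tight; it leaves termination and the existence of such a fully-strict path somewhat implicit. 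You instead perform a single top-down pass, routing the deficit $F_1=1-T(1)$ through the tree with the greedy split $F_{\ell(m)}=\min(F_m,s^m_L)$, and the invariant $F_m\leq s^m_L+s^m_R=1-T(m)$ (verified by induction using $p_m+q_m=1$) guarantees both children's capacities are respected simultaneously. Your version is more explicit and self-contained --- it gives a closed-form construction with a clean certificate of feasibility rather than an iterative augmentation whose finiteness one must separately argue --- and it in fact establishes the slightly stronger statement that the two LP feasible regions have the same projection onto $(z,w,b)$, not merely equal optimal values. The paper's version is shorter and makes the max-flow/augmenting-path intuition more visible. One cosmetic point: your $T(\ell(m))$ should be read as the $z$-mass at or below $\ell(m)$ (so that it equals $\sum_{n:\,m\in\sets{AL}(n)}z^i_{a(n),n}$ even when $\ell(m)$ is itself a leaf); with that convention every step goes through.
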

\begin{proof}
Let $\nu_1$ and $\nu_2$ be the optimal objective value of the LP relaxations of \eqref{eq:OCT2.0} and \eqref{eq:Projection}. Note that \eqref{eq:Projection} is a relaxation of \eqref{eq:OCT2.0}, obtained by dropping constraint \eqref{eq:OCT2.0_assign} and replacing $\zeta$ with a lower bound in constraints \eqref{eq:OCT2.0_right}-\eqref{eq:OCT2.0_left}. Therefore, it follows that $\nu_2\geq \nu_1$. We now show that $\nu_2\leq \nu_1$.

Let $(b^*,w^*,z^*)$ be an optimal solution of \eqref{eq:Projection} and let $i\in \sets I$. For any given $i\in \sets I$, by summing constraints \eqref{eq:Projection_right} and \eqref{eq:Projection_left} for the root node $m=1$, we find that 
\begin{align}\label{eq:z_ub}1=\sum_{f \in \sets F:x_f^i=1}b_{1f}^* +\sum_{f \in \sets F:x_f^i=0}b_{1f}^*
\geq  \sum_{n\in \sets L}(z_{a(n),n}^{i})^*.\end{align}
Now let $\zeta=z^*$. If the inequality in \eqref{eq:z_ub} holds at equality, then $(b^*,w^*,z^*,\zeta)$ satisfies all constraints in \eqref{eq:OCT2.0} and the proof is complete. Otherwise, it follows that either \eqref{eq:Projection_right} or \eqref{eq:Projection_left} is strict at node $m=1$, and without loss of generality assume \eqref{eq:Projection_right} is strict. Summing up inequalities \eqref{eq:Projection_right} and \eqref{eq:Projection_left} for node $m=r(1)$, we find that 
\begin{align}\label{eq:z_ub2}1=\sum_{f \in \sets F}b_{r(1)f}^*
>  \sum_{n\in \sets L:r(1)\in \sets{AR}(n)\cup \sets{AL}(n)}(z_{a(n),n}^{i})^*,\end{align}
where the strict inequality holds since the right hand side of \eqref{eq:z_ub2} is no greater than the right hand side of \eqref{eq:z_ub}. By applying this process recursively, we obtain a path from node 1 to a leaf $h\in \sets L$ such that all inequalities \eqref{eq:Projection_right}-\eqref{eq:Projection_left} corresponding to this path are strict. The value $\zeta_{a(h),h}^i$ can be then increased by the minimum slack in the constraints, and the overall process can be repeated until inequality \eqref{eq:OCT2.0_assign} is tight.
\end{proof}

\subsection{Substitution}
Finally, to recover the FlowOCT formulation, for all $m\in \sets L$, substitute variables 
\begin{align*}z_{m,r(m)}^i&:=\sum_{n\in \sets L: m\in \sets{ AR}(n)}z_{a(n),n}^i,\text{ and}\\ 
z_{m,\ell(m)}^i&:=\sum_{n\in \sets L: m\in \sets{ AL}(n)}z_{a(n),n}^i.
\end{align*}
and for all $n\in \sets L$ introduce variables $z_{n,t}=z_{a(n),n}$.
Constraints \eqref{eq:Projection_right}-\eqref{eq:Projection_left} reduce to $\sum_{f \in \sets F:x_f^i=1}b_{mf}\geq z_{m,r(m)}^i$ and $\sum_{f \in \sets F:x_f^i=0}b_{mf}\geq z_{m,\ell(m)}^i$. Finally, since 
\begin{align*}
    z_{a(m),m}&=\sum_{n\in \sets L:m\in \sets{AR}(n)\cup \sets{AL}(n)}z_{a(n),n}\\
    &=\sum_{n\in \sets L:m\in \sets{AR}(n)}z_{a(n),n}+\sum_{n\in \sets L:m\in \sets{AL}(n)}z_{a(n),n}\\
    &=z_{m,r(m)}+z_{m,\ell(m)},
\end{align*}
we recover the flow conservation constraints.

\section{Extended Results}
\label{appendix_sec:ext_results}

In Table~\ref{tab:in_sample}, for each dataset and depth, we show the in sample results for each approach. In this table, for $\lambda=0$, we average the in sample results including the training accuracy, optimality gap and solving time across five different samples trained over 50\% of the data when $\lambda$ is fixed to be $zero$.
Out of 32 instances, \texttt{OCT} has the best training accuracy in 0 instances (excluding ties) and \texttt{BinOCT} in 7 instances while \texttt{FlowOCT} and \texttt{Benders} have the best accuracy in 11 instances. 
In terms of solving time, \texttt{BinOCT} achieves a smaller solving time in 7 instances while \texttt{Benders} achieves a smaller solving time in 13 instances (excluding ties).
In terms of optimality gap, \texttt{OCT} achieves a smaller gap time only in one of the instances while \texttt{Benders} achieves a smaller gap time in 15 instances (excluding ties).

Similarly for $\lambda > 0$ we show similar results but this time for a given instance and a $\lambda \in [0.1,0.9]$ with step size of $0.1$ we solve 5 different samples and report the average results across all 45 samples. As \texttt{BinOCT} does not have any regularization term, we have excluded it from this section.
We observe that \texttt{Benders} outperform \texttt{OCT} in both optimality gap and solving time for all instances.

\begin{sidewaystable*}[t]
\caption{In sample results}
\label{tab:in_sample}
\setlength{\tabcolsep}{2pt}
\resizebox{\textheight}{!}{%
\begin{tabularx}{1.2\textwidth}{lc|ccc|ccc|ccc|ccc|cccccc}
\hline
\multirow{3}{*}{Dataset} &
  \multirow{3}{*}{Depth} &
  \multicolumn{12}{c|}{$\lambda = 0$} &
  \multicolumn{6}{c}{$\lambda > 0$} \\ \cline{3-20} 
 &
   &
  \multicolumn{3}{c|}{OCT} &
  \multicolumn{3}{c|}{BinOCT} &
  \multicolumn{3}{c|}{FlowOCT} &
  \multicolumn{3}{c|}{Benders} &
  \multicolumn{2}{c|}{OCT} &
  \multicolumn{2}{c|}{FlowOCT} &
  \multicolumn{2}{c}{Benders} \\
 &
   &
  Train-acc &
  Gap &
  Time &
  Train-acc &
  Gap &
  Time &
  Train-acc &
  Gap &
  Time &
  Train-acc &
  Gap &
  Time &
  Gap &
  \multicolumn{1}{c|}{Time} &
  Gap &
  \multicolumn{1}{c|}{Time} &
  Gap &
  Time \\ \hline
monk3 &
  2 &
  \textbf{94.0} &
  \textbf{0.0} &
  2.1 &
  \textbf{94.0} &
  \textbf{0.0} &
  \textbf{0.3} &
  \textbf{94.0} &
  \textbf{0.0} &
  6.7 &
  \textbf{94.0} &
  \textbf{0.0} &
  0.9 &
  \textbf{0.0} &
  \multicolumn{1}{c|}{2.8} &
  \textbf{0.0} &
  \multicolumn{1}{c|}{5.9} &
  \textbf{0.0} &
  \textbf{0.8} \\
monk3 &
  3 &
  \textbf{98.0} &
  \textbf{0.0} &
  283.6 &
  \textbf{98.0} &
  \textbf{0.0} &
  32.2 &
  \textbf{98.0} &
  \textbf{0.0} &
  89.1 &
  \textbf{98.0} &
  \textbf{0.0} &
  \textbf{16.1} &
  \textbf{0.0} &
  \multicolumn{1}{c|}{313.7} &
  \textbf{0.0} &
  \multicolumn{1}{c|}{23.7} &
  \textbf{0.0} &
  \textbf{4.0} \\
monk3 &
  4 &
  \textbf{100.0} &
  \textbf{0.0} &
  391.2 &
  \textbf{100.0} &
  \textbf{0.0} &
  654.3 &
  \textbf{100.0} &
  \textbf{0.0} &
  141.6 &
  \textbf{100.0} &
  \textbf{0.0} &
  \textbf{73.3} &
  0.5 &
  \multicolumn{1}{c|}{1197.3} &
  \textbf{0.0} &
  \multicolumn{1}{c|}{244.1} &
  \textbf{0.0} &
  \textbf{31.0} \\
monk3 &
  5 &
  \textbf{100.0} &
  \textbf{0.0} &
  203.8 &
  \textbf{100.0} &
  \textbf{0.0} &
  156.6 &
  \textbf{100.0} &
  \textbf{0.0} &
  117.7 &
  \textbf{100.0} &
  \textbf{0.0} &
  \textbf{5.9} &
  0.7 &
  \multicolumn{1}{c|}{1661.3} &
  \textbf{0.0} &
  \multicolumn{1}{c|}{262.5} &
  \textbf{0.0} &
  \textbf{35.2} \\
monk1 &
  2 &
  \textbf{85.8} &
  \textbf{0.0} &
  2.9 &
  \textbf{85.8} &
  \textbf{0.0} &
  1.5 &
  \textbf{85.8} &
  \textbf{0.0} &
  8.0 &
  \textbf{85.8} &
  \textbf{0.0} &
  \textbf{1.2} &
  \textbf{0.0} &
  \multicolumn{1}{c|}{4.0} &
  \textbf{0.0} &
  \multicolumn{1}{c|}{8.6} &
  \textbf{0.0} &
  \textbf{1.0} \\
monk1 &
  3 &
  \textbf{95.5} &
  \textbf{0.0} &
  672.1 &
  \textbf{95.5} &
  \textbf{0.0} &
  44.8 &
  \textbf{95.5} &
  \textbf{0.0} &
  45.7 &
  \textbf{95.5} &
  \textbf{0.0} &
  \textbf{4.8} &
  0.3 &
  \multicolumn{1}{c|}{1096.3} &
  \textbf{0.0} &
  \multicolumn{1}{c|}{25.6} &
  \textbf{0.0} &
  \textbf{4.3} \\
monk1 &
  4 &
  \textbf{100.0} &
  \textbf{0.0} &
  49.5 &
  \textbf{100.0} &
  \textbf{0.0} &
  61.9 &
  \textbf{100.0} &
  \textbf{0.0} &
  69.0 &
  \textbf{100.0} &
  \textbf{0.0} &
  \textbf{5.3} &
  0.3 &
  \multicolumn{1}{c|}{1042.1} &
  \textbf{0.0} &
  \multicolumn{1}{c|}{63.1} &
  \textbf{0.0} &
  \textbf{6.2} \\
monk1 &
  5 &
  \textbf{100.0} &
  \textbf{0.0} &
  116.9 &
  \textbf{100.0} &
  \textbf{0.0} &
  5.3 &
  \textbf{100.0} &
  \textbf{0.0} &
  214.3 &
  \textbf{100.0} &
  \textbf{0.0} &
  \textbf{2.7} &
  3.3 &
  \multicolumn{1}{c|}{2607.3} &
  \textbf{0.0} &
  \multicolumn{1}{c|}{154.5} &
  \textbf{0.0} &
  \textbf{8.4} \\
monk2 &
  2 &
  \textbf{71.4} &
  \textbf{0.0} &
  13.3 &
  \textbf{71.4} &
  \textbf{0.0} &
  \textbf{5.7} &
  \textbf{71.4} &
  \textbf{0.0} &
  12.8 &
  \textbf{71.4} &
  \textbf{0.0} &
  22.1 &
  \textbf{0.0} &
  \multicolumn{1}{c|}{15.6} &
  \textbf{0.0} &
  \multicolumn{1}{c|}{15.4} &
  \textbf{0.0} &
  \textbf{3.5} \\
monk2 &
  3 &
  81.0 &
  22.1 &
  3602.9 &
  80.5 &
  99.8 &
  3600.0 &
  \textbf{81.2} &
  \textbf{0.0} &
  1106.6 &
  \textbf{81.2} &
  \textbf{0.0} &
  \textbf{693.0} &
  18.5 &
  \multicolumn{1}{c|}{3076.1} &
  \textbf{0.0} &
  \multicolumn{1}{c|}{619.7} &
  \textbf{0.0} &
  \textbf{373.7} \\
monk2 &
  4 &
  88.3 &
  13.4 &
  3607.3 &
  86.7 &
  100.0 &
  \textbf{3600.0} &
  \textbf{89.5} &
  8.4 &
  3602.5 &
  \textbf{89.5} &
  \textbf{8.0} &
  \textbf{3600.0} &
  27.4 &
  \multicolumn{1}{c|}{3211.1} &
  11.0 &
  \multicolumn{1}{c|}{2886.9} &
  \textbf{9.2} &
  \textbf{2818.2} \\
monk2 &
  5 &
  92.6 &
  8.1 &
  3617.9 &
  94.8 &
  100.0 &
  3600.0 &
  93.6 &
  7.0 &
  3605.5 &
  \textbf{96.9} &
  \textbf{3.2} &
  \textbf{3505.4} &
  31.0 &
  \multicolumn{1}{c|}{3224.2} &
  14.4 &
  \multicolumn{1}{c|}{2987.8} &
  \textbf{13.1} &
  \textbf{2829.3} \\
house &
  2 &
  \textbf{97.1} &
  \textbf{0.0} &
  5.8 &
  \textbf{97.1} &
  \textbf{0.0} &
  \textbf{0.6} &
  \textbf{97.1} &
  \textbf{0.0} &
  12.9 &
  \textbf{97.1} &
  \textbf{0.0} &
  1.5 &
  \textbf{0.0} &
  \multicolumn{1}{c|}{4.3} &
  \textbf{0.0} &
  \multicolumn{1}{c|}{8.0} &
  \textbf{0.0} &
  \textbf{1.0} \\
house &
  3 &
  \textbf{99.0} &
  \textbf{0.0} &
  298.1 &
  \textbf{99.0} &
  \textbf{0.0} &
  180.4 &
  \textbf{99.0} &
  \textbf{0.0} &
  192.9 &
  \textbf{99.0} &
  \textbf{0.0} &
  \textbf{67.1} &
  \textbf{0.0} &
  \multicolumn{1}{c|}{407.5} &
  \textbf{0.0} &
  \multicolumn{1}{c|}{76.5} &
  \textbf{0.0} &
  \textbf{10.9} \\
house &
  4 &
  \textbf{100.0} &
  \textbf{0.0} &
  92.6 &
  99.8 &
  20.0 &
  1105.6 &
  \textbf{100.0} &
  \textbf{0.0} &
  276.9 &
  \textbf{100.0} &
  \textbf{0.0} &
  \textbf{13.4} &
  \textbf{0.0} &
  \multicolumn{1}{c|}{788.3} &
  \textbf{0.0} &
  \multicolumn{1}{c|}{164.6} &
  \textbf{0.0} &
  \textbf{22.2} \\
house &
  5 &
  \textbf{100.0} &
  \textbf{0.0} &
  83.1 &
  \textbf{100.0} &
  \textbf{0.0} &
  454.6 &
  \textbf{100.0} &
  \textbf{0.0} &
  206.9 &
  \textbf{100.0} &
  \textbf{0.0} &
  \textbf{18.8} &
  0.2 &
  \multicolumn{1}{c|}{1334.0} &
  \textbf{0.0} &
  \multicolumn{1}{c|}{310.7} &
  \textbf{0.0} &
  \textbf{25.7} \\
balance &
  2 &
  70.2 &
  \textbf{0.0} &
  159.4 &
  70.2 &
  \textbf{0.0} &
  \textbf{6.8} &
  \textbf{70.3} &
  \textbf{0.0} &
  49.5 &
  70.2 &
  \textbf{0.0} &
  7.0 &
  \textbf{0.0} &
  \multicolumn{1}{c|}{142.2} &
  \textbf{0.0} &
  \multicolumn{1}{c|}{92.1} &
  \textbf{0.0} &
  \textbf{6.4} \\
balance &
  3 &
  75.0 &
  33.6 &
  3613.4 &
  76.5 &
  95.3 &
  3600.0 &
  \textbf{76.6} &
  0.5 &
  3320.8 &
  \textbf{76.6} &
  \textbf{0.0} &
  \textbf{548.4} &
  37.8 &
  \multicolumn{1}{c|}{3613.4} &
  0.1 &
  \multicolumn{1}{c|}{2342.4} &
  \textbf{0.0} &
  \textbf{359.1} \\
balance &
  4 &
  75.5 &
  32.7 &
  3635.1 &
  78.8 &
  100.0 &
  \textbf{3600.0} &
  76.4 &
  30.5 &
  3611.3 &
  \textbf{79.7} &
  \textbf{9.7} &
  \textbf{3600.0} &
  43.2 &
  \multicolumn{1}{c|}{3635.0} &
  22.1 &
  \multicolumn{1}{c|}{3611.3} &
  \textbf{10.1} &
  \textbf{3340.2} \\
balance &
  5 &
  71.9 &
  39.1 &
  3686.4 &
  81.7 &
  100.0 &
  \textbf{3600.0} &
  78.8 &
  27.0 &
  3623.0 &
  \textbf{82.9} &
  \textbf{19.9} &
  \textbf{3600.0} &
  52.3 &
  \multicolumn{1}{c|}{3689.8} &
  29.3 &
  \multicolumn{1}{c|}{3622.8} &
  \textbf{24.4} &
  \textbf{3600.1} \\
tic-tac-toe &
  2 &
  72.7 &
  1.4 &
  2629.8 &
  72.7 &
  \textbf{0.0} &
  \textbf{89.7} &
  \textbf{72.8} &
  \textbf{0.0} &
  1198.7 &
  72.7 &
  \textbf{0.0} &
  118.3 &
  2.8 &
  \multicolumn{1}{c|}{2365.7} &
  \textbf{0.0} &
  \multicolumn{1}{c|}{1193.7} &
  \textbf{0.0} &
  \textbf{115.8} \\
tic-tac-toe &
  3 &
  77.5 &
  29.1 &
  3626.6 &
  \textbf{78.8} &
  100.0 &
  \textbf{3600.0} &
  76.0 &
  31.5 &
  3610.6 &
  78.7 &
  \textbf{18.8} &
  \textbf{3600.0} &
  33.6 &
  \multicolumn{1}{c|}{3626.5} &
  29.1 &
  \multicolumn{1}{c|}{3610.6} &
  \textbf{18.4} &
  \textbf{3600.2} \\
tic-tac-toe &
  4 &
  74.9 &
  33.6 &
  3670.0 &
  \textbf{85.0} &
  100.0 &
  \textbf{3600.0} &
  79.5 &
  26.0 &
  3623.1 &
  83.0 &
  \textbf{20.5} &
  \textbf{3600.0} &
  38.9 &
  \multicolumn{1}{c|}{3670.0} &
  27.2 &
  \multicolumn{1}{c|}{3621.7} &
  \textbf{24.8} &
  \textbf{3600.2} \\
tic-tac-toe &
  5 &
  71.4 &
  40.1 &
  3774.4 &
  \textbf{88.2} &
  100.0 &
  \textbf{3600.0} &
  70.4 &
  42.2 &
  3769.1 &
  \textbf{88.2} &
  \textbf{13.4} &
  \textbf{3600.0} &
  41.4 &
  \multicolumn{1}{c|}{3778.4} &
  35.1 &
  \multicolumn{1}{c|}{3650.6} &
  \textbf{24.9} &
  \textbf{3600.2} \\
car\_eval &
  2 &
  \textbf{78.5} &
  \textbf{0.0} &
  1047.8 &
  \textbf{78.5} &
  0.0 &
  \textbf{27.7} &
  76.9 &
  7.7 &
  1214.8 &
  \textbf{78.5} &
  \textbf{0.0} &
  66.4 &
  \textbf{0.0} &
  \multicolumn{1}{c|}{1610.7} &
  0.8 &
  \multicolumn{1}{c|}{679.6} &
  \textbf{0.0} &
  \textbf{55.1} \\
car\_eval &
  3 &
  77.2 &
  29.9 &
  3634.7 &
  \textbf{81.9} &
  100.0 &
  \textbf{3600.0} &
  79.4 &
  26.2 &
  3614.9 &
  81.7 &
  \textbf{9.3} &
  \textbf{3600.0} &
  36.0 &
  \multicolumn{1}{c|}{3634.8} &
  22.2 &
  \multicolumn{1}{c|}{3614.9} &
  \textbf{6.7} &
  \textbf{3571.2} \\
car\_eval &
  4 &
  76.5 &
  31.0 &
  3692.3 &
  \textbf{83.9} &
  100.0 &
  \textbf{3600.0} &
  71.9 &
  39.1 &
  3630.8 &
  83.6 &
  \textbf{19.6} &
  \textbf{3600.0} &
  37.9 &
  \multicolumn{1}{c|}{3691.9} &
  28.2 &
  \multicolumn{1}{c|}{3630.6} &
  \textbf{20.1} &
  \textbf{3600.1} \\
car\_eval &
  5 &
  72.1 &
  39.1 &
  3828.6 &
  85.0 &
  100.0 &
  \textbf{3600.0} &
  71.6 &
  39.9 &
  3742.6 &
  \textbf{85.2} &
  \textbf{17.3} &
  \textbf{3600.0} &
  37.3 &
  \multicolumn{1}{c|}{3828.4} &
  162072.3 &
  \multicolumn{1}{c|}{3753.2} &
  \textbf{18.9} &
  \textbf{3600.1} \\
kr-vs-kp &
  2 &
  84.7 &
  18.0 &
  3641.1 &
  \textbf{86.9} &
  \textbf{0.0} &
  \textbf{102.6} &
  82.1 &
  14.3 &
  3481.7 &
  \textbf{86.9} &
  \textbf{0.0} &
  671.2 &
  24.4 &
  \multicolumn{1}{c|}{3641.2} &
  18.4 &
  \multicolumn{1}{c|}{3544.5} &
  \textbf{0.0} &
  \textbf{652.1} \\
kr-vs-kp &
  3 &
  73.8 &
  35.6 &
  3725.7 &
  \textbf{92.6} &
  100.0 &
  \textbf{3600.0} &
  61.1 &
  65.0 &
  3701.0 &
  90.4 &
  \textbf{10.7} &
  \textbf{3600.0} &
  40.1 &
  \multicolumn{1}{c|}{3721.9} &
  49.2 &
  \multicolumn{1}{c|}{3671.3} &
  \textbf{11.7} &
  \textbf{3600.2} \\
kr-vs-kp &
  4 &
  68.1 &
  47.2 &
  3923.1 &
  \textbf{92.5} &
  100.0 &
  \textbf{3600.0} &
  63.8 &
  58.1 &
  4560.5 &
  89.8 &
  \textbf{11.8} &
  \textbf{3600.0} &
  49.6 &
  \multicolumn{1}{c|}{3923.7} &
  419947.6 &
  \multicolumn{1}{c|}{4479.5} &
  \textbf{9.7} &
  \textbf{3600.2} \\
kr-vs-kp &
  5 &
  66.6 &
  50.9 &
  4398.2 &
  \textbf{93.6} &
  100.0 &
  \textbf{3600.0} &
  51.5 &
  10000 &
  3790.7 &
  90.5 &
  \textbf{10.7} &
  \textbf{3600.0} &
  50.5 &
  \multicolumn{1}{c|}{4400.7} &
  677022.8 &
  \multicolumn{1}{c|}{4789.0} &
  \textbf{12.5} &
  \textbf{3600.3} \\ \hline
\end{tabularx}}
\end{sidewaystable*}

\end{document}